\documentclass[]{llncs}
\usepackage[T1]{fontenc}
\usepackage{graphicx}

\usepackage[hidelinks]{hyperref}
\usepackage{color}

\usepackage[american]{babel}

\usepackage{amsmath}
\usepackage{amssymb}
\usepackage{algorithm}
\usepackage{algpseudocode}
\usepackage{stmaryrd}
\usepackage[compatibility=false]{caption}
\usepackage{subcaption}
\usepackage[export]{adjustbox}
\usepackage{enumitem}
\usepackage{fullpage}
\usepackage{microtype}

\newcommand{\counterexample}{e}
\newcommand{\counterexamples}{E}

\begin{document}
\title{Reinforcement Learning with \texorpdfstring{\\}{ } Symbolic Reward Machines}
\author{Thomas Krug\inst{1,2} \and
Daniel Neider\inst{1,2}}

\institute{TU Dortmund University \and Research Center Trustworthy Data Science and Security}
\maketitle
\begin{abstract}
Reward Machines (RMs) are an established mechanism in Reinforcement Learning (RL) to represent and learn sparse, temporally extended tasks with non-Markovian rewards. RMs rely on high-level information in the form of labels that are emitted by the environment alongside the observation. However, this concept requires manual user input for each environment and task. The user has to create a suitable labeling function that computes the labels. These limitations lead to poor applicability in widely adopted RL frameworks.
We propose Symbolic Reward Machines (SRMs) together with the learning algorithms QSRM and LSRM to overcome the limitations of RMs. SRMs consume only the standard output of the environment and process the observation directly through guards that are represented by symbolic formulas. In our evaluation, our SRM methods outperform the baseline RL approaches and generate the same results as the existing RM methods. At the same time, our methods adhere to the widely used environment definition and provide interpretable representations of the task to the user.

\keywords{Reinforcement Learning  \and Symbolic Reward Machines \and Non-Markovian Reward Functions.}
\end{abstract}
\section{Introduction}
Reinforcement Learning (RL) is an established mechanism to train agents to achieve a task in an environment that only outputs a state, which is also called observation, and a reward. A function, which is called reward function, is used to generate the reward. The reward indicates how good the last move of the agent was. The reward function is Markovian in the classical definition of RL \cite{Sutton.2020}. This means that the reward depends only on the last state and the action taken by the agent.
However, this is for many practical tasks a critical restriction because real-world problems often consist of a sequence of steps to accomplish the task's goal. As an example, if an agent should fill a certain machine with wood, the agent has first to collect wood before the agent can put it in the machine. To define these tasks, non-Markovian reward functions are required. Non-Markovian reward functions map state histories to rewards. In other words, they include temporal dependencies.

Reward Machines (RMs) \cite{Icarte.2018,10.1613/jair.1.12440} are one approach to represent non-Markovian reward functions. Icarte et al. \cite{Icarte.2018} developed QRM that uses RMs to train agents. The idea of QRM is to use one Q-table for each RM state and to store the current RM state. Thus, QRM requires the assumption that the RMs are given a priori. Furthermore, QRM accelerates the learning process by performing a multi-update step: QRM not only updates the Q-value of the current RM state but also updates the Q-values for all RM states corresponding to the observed environment state. This is possible because QRM has access to the RM and therefore knows which reward the agent would receive if the RM were in any of its other RM states. JIRP \cite{Xu_Gavran_Ahmad_Majumdar_Neider_Topcu_Wu_2020} lifts this assumption and infers the RM on the fly along with the policy.

However, RMs generally require an extended variant of the broadly used interaction scheme between the agent and the environment. The environment has to output high-level events which are used as inputs for the RM. These high-level events are generated by a labeling function. Aside from requiring external expert knowledge, implementing the labeling function is a technical obstacle that prevents using most RL environments out of the box. Furthermore, on the one hand, the labeling function has to be highly generic for the environment to support a variety of tasks. On the other hand, it has to be specific enough for a concrete task. Consequently, the labeling function either needs to be continuously extended, or it has to be designed to cover all possible tasks. However, both strategies result in poor usability. In addition, RMs cannot even represent Markovian tasks without an appropriate labeling function. As an example, if you get a reward at a specific state, the labeling function has to generate a specific high-level event for this state. Otherwise, it is not possible to design an RM that outputs the correct reward for this state.

We propose Symbolic Reward Machines (SRMs) to lift the requirement of the labeling function and the deviation from the broadly used interaction scheme between the agent and the environment. SRMs provide the ability to represent non-Markovian reward functions without an explicit labeling function.
SRMs have internal states to represent specific information, e.g., the task completion. The agent can then make decisions based on these internal states. In contrast to RMs, SRMs do not require abstract high-level events from a labeling function. Instead, SRMs use symbolic formulas as guards at the transitions. These guards can directly process the environment state.
On the basis of SRMs, we provide RL learning algorithms that support the usage of SRMs. We call them QSRM and LSRM. QSRM requires the SRM as input. LSRM is an extension of QSRM and learns the SRM in the training process.
Thus, LSRM enables the possibility to learn an effective policy end-to-end again. In this context, end-to-end refers to the ability to train from a standard environment to an effective policy. The idea of LSRM is to learn the SRM from experiences. By learning the SRM, LSRM provides valuable and interpretable information about the reward structure. For example, from the learned SRM, the user gains insight into the structure of the task and obtains information about what the agent needs to do to accomplish the task's goal. The learned SRM provides a step for step explanation for the task.
In the evaluation part, we demonstrate the effectiveness of SRMs and our new algorithms in multiple experiments including finite (discrete) and infinite (continuous) state spaces. In addition, we compare the results to existing algorithms, e.g., QRM, to show the advantage of our novel methods. Furthermore, the results demonstrate that we can train effective policies end-to-end, and thus, respect the standard definition for RL in the case of non-Markovian reward functions.

Beside RMs, there is further related work in the area of non-Markovian reward functions in RL.
Zhou and Li \cite{pmlr-v162-zhou22b} introduced a concept that they also name `symbolic reward machines'. However, their formal definition differs to our definition of a Symbolic Reward Machine. First, they define the guards as a set of predicates on trajectories $\Psi$, where $\Psi \subseteq (S \times A)^* \rightarrow \mathbb{B}$. This means that a guard in a state of the SRM can access the whole trajectory seen so far. In addition, the reward, which the SRM outputs in a state, can also depend on the whole trajectory seen so far because they define the reward function as a set of $\lambda$-terms of type $(S \times A)^* \rightarrow \mathbb{R}$, which are basically functions from a state-action sequence to a real value. These definitions of the guards and the reward function give very strong possibilities to model the transition function and the generation of the reward. However, it also over-complicates the transition function and the reward calculation. In theory, they do not need the SRM if they have suitable lambda reward functions. Furthermore, they developed an algorithm that infers values for predefined holes of a fixed SRM. The user has to specify the structure of the SRM and the transitions with the holes. The learning algorithm does not learn the structure of the SRM, e.g., how many states are needed or which transitions are used. Consequently, the method cannot learn a policy end-to-end because of the required user input. This complicates the usability of the concept and reduces the user-friendliness, as the user must make inputs for each environment and task in order to generate a well-suited SRM.

Furthermore, Ardon et al. \cite{10.5555/3709347.3743526} published recently `First-Order Reward Machines' (FORMs). According to Ardon et al., FORMs are superior to RMs because they can generalize over object properties and can express conditions over whole sets of objects. Thus, FORMs often need less states than RMs. This results in smaller automatons and faster learning. However, they demonstrate the effectiveness of their method solely in small, discrete environments. In addition, FORMs also require a labeling function that maps a specific raw state to a so-called observable. Consequently, they are also not directly compatible to a standard environment. For example, if we consider a value x and want to express that this value is in a certain range, FORM requires an atom which is true if x is in the respective range. However, the specific interval has to be given a priori. It is not possible to define an interval with variables in it and FORM infers the specific values for these variables. Especially for infinite state space environments, e.g., continuous values for x, this is a big challenge because there are an infinite amount of intervals that could be used. These limitations also make FORMs a suboptimal choice for standard environments with non-Markovian reward functions. This holds for finite environments, and even more for infinite ones.

There are also more approaches to represent non-Markovian reward functions and leverage the structured information of them \cite{Corazza_2022,KR2025-55,bester2023counting}. However, most of the approaches relying on an environment with a labeling function as discussed before. Consequently, they have similar challenges and limitations as RMs, QRM and JIRP.

\section{Preliminaries}
In this section, we introduce the necessary background of Reinforcement Learning and Reward Machines.
We use a modified version of the `Office World' as our running example throughout the paper. The Office World was introduced by Icarte et al. \cite{Icarte.2018}. We define a discrete and a continuous version of the Office World. The discrete version is defined by the state space $S=\{(x,y)\ |\ x,y \in \mathbb{N}_0\ \wedge x \leq 14 \wedge y \leq 10 \}$. It is an example for a finite environment. The continuous version is defined by the state space $S = \{(x,y)\ |\ x,y \in \mathbb{R}_{\geq 0}\ \wedge x < 15 \wedge y < 11 \}$. It is an example for an infinite environment. Figure \ref{fig:Preliminaries_Office_World_Environment} shows the discrete version of the Office World.
\begin{figure}
    \centering
    \includegraphics[width=0.5\linewidth]{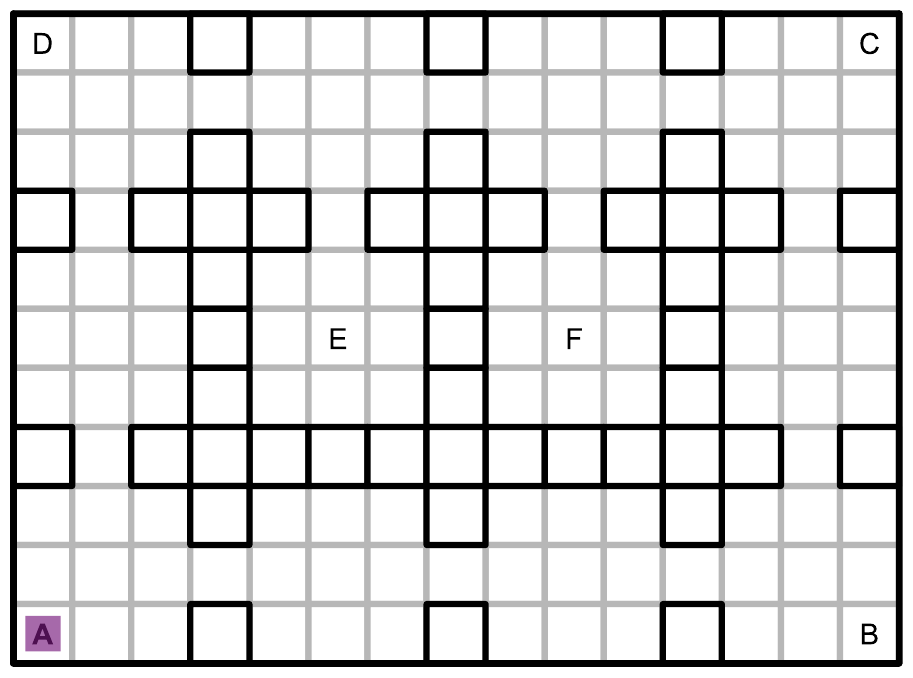}
    \caption {Office World environment. The discrete version is displayed. The labeled environments output the labels shown at the specific positions.}
    \label {fig:Preliminaries_Office_World_Environment}
\end{figure}

A Markov Decision Process (MDP) is often used to model an environment.
\begin{definition}[\textbf{MDP}]\label{definition_mdp}
A Markov Decision Process is a 5-tuple $M = (S,s_I,A,p,R)$ consisting of
a set of states $S$,
an initial state $s_I \in S$,
a finite set of actions $A$,
a probabilistic transition function $p\colon S \times A \times S \rightarrow [0,1]$, and
a reward function $R\colon (S \times A)^+ \times S \rightarrow \mathbb{R}$.
\end{definition}
Our definition of an MDP differs slightly from the classical definition in RL \cite{Puterman.1990,Kaelbling.1996,Sutton.2020}. There, the reward function is defined by $R\colon S \times A \times S \rightarrow \mathbb{R}$. Instead, we use an extended, generalized definition where rewards may depend on the current trajectory (the state-action history). Note that this generalized definition captures both Markovian and non-Markovian reward functions.

An example of a non-Markovian task in the Office World is that the agent should accomplish the following sub-tasks in the defined order:
\begin{enumerate}\label{Preliminaries_example_task}
    \item The robot has to move to the left inner office (label E);
    \item the robot has to visit the right inner office (label F); and
    \item the robot has to move back to the initial position (label A).
\end{enumerate}
The agent receives a reward of 1, 2 and 10 after it reaches the respective sub-task in the right order. The proposed RMs\footnote{Icarte et al. \cite{Icarte.2018,10.1613/jair.1.12440} introduce Reward Machines more generally. The Reward Machines, that are considered in this paper, are defined by Icarte et al. \cite{Icarte.2018,10.1613/jair.1.12440} as Simple Reward Machines.} by Icarte et al. \cite{Icarte.2018,10.1613/jair.1.12440} can model this reward structure. Figure \ref{fig:post_inner_offices_env_rm} shows a graphical representation of an RM for this task. Note that the RM uses E, F, A, which are high-level events that the environment has to output by using an explicit labeling function. Therefore, the MDP definition has to be extended to a labeled MDP. A labeled MDP consists of an MDP, propositional symbols $P$ (labels), and a labeling function $L\colon S \rightarrow 2^P$ \cite{Icarte.2018,Kaelbling.1996,Xu_Gavran_Ahmad_Majumdar_Neider_Topcu_Wu_2020}. The labeling function indicates which propositional symbols (labels) are true or false for a given environment state.

\begin{figure}
    \centering
    \begin{subfigure}[b]{0.3\linewidth}
        \includegraphics[width=\linewidth]{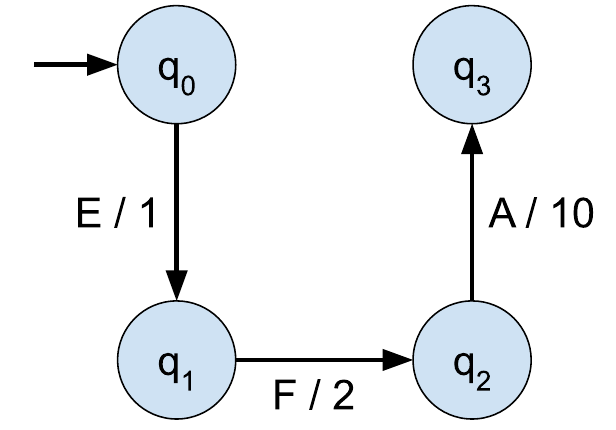}
        \subcaption{RM}
        \label{fig:post_inner_offices_env_rm}
    \end{subfigure}
    \hspace{0.05\linewidth}
    \begin{subfigure}[b]{0.45\linewidth}
        \includegraphics[width=\linewidth]{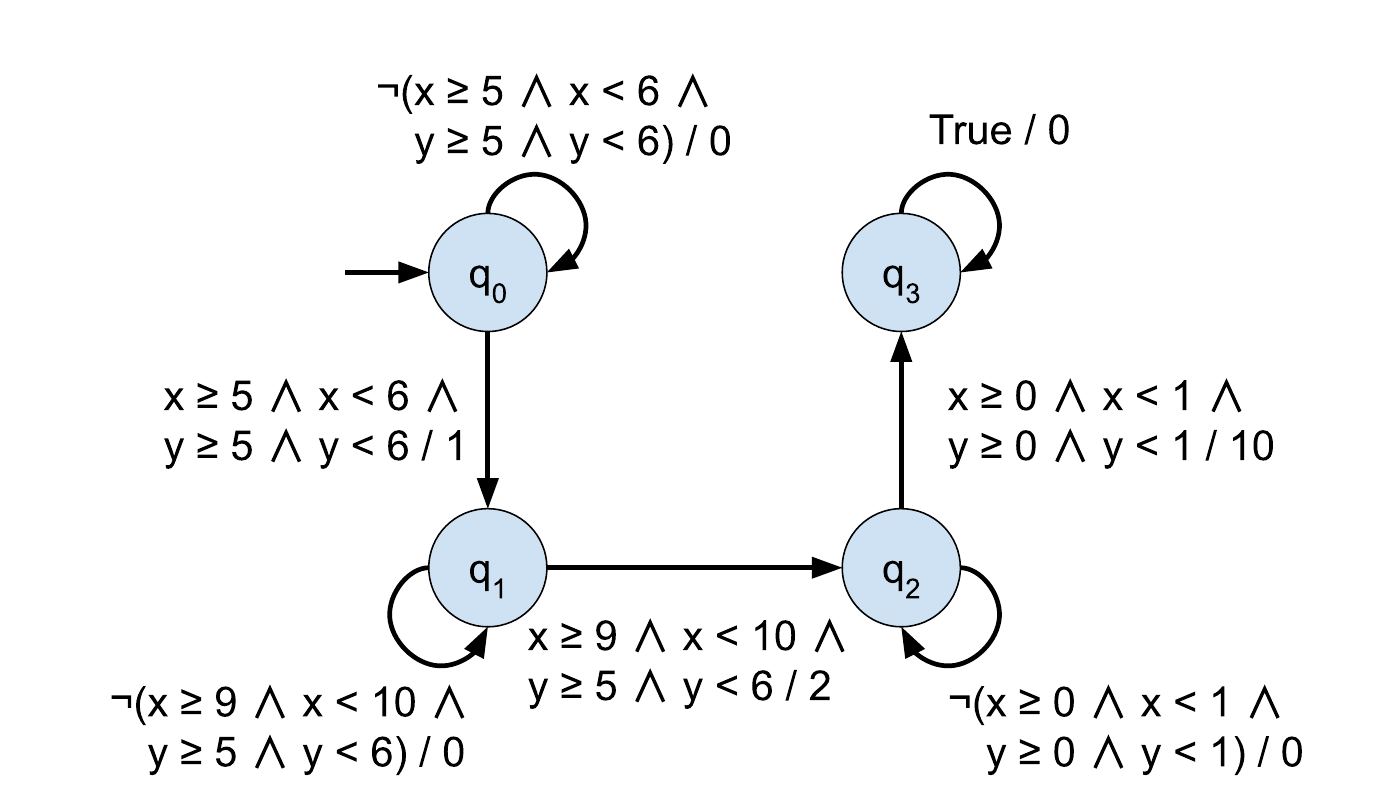}
        \subcaption{SRM}
        \label{fig:post_inner_offices_env_srm}
    \end{subfigure}
    \caption{RM and SRM for the example task in the Office World. Self-loops with an output of zero are omitted in the RM.}
    \label{fig:reward_machine_task_post_inner_offices}
\end{figure}

The agent uses a policy to select an action to interact with the environment. A deterministic policy $\pi (s)\colon S \rightarrow A$ directly maps a state to an action. A probabilistic policy $\pi (a|s)\colon S \times A \rightarrow [0,1]$ outputs a probability distribution for each state. The agent can then sample an action from this distribution.
Based on an MDP and a policy, a \textbf{trajectory} is defined by the interaction of an agent with an MDP. It is a (state, action, reward)-sequence $s_0,a_0,r_1,s_1,...,s_k,a_k,r_{k+1},s_{k+1}$ where $s_0 = s_I$. The sequence $s_0,s_1,...,s_k,s_{k+1}$ is the \textbf{state-sequence} of the trajectory, and the sequence  $r_1,r_2,...,r_{k+1}$ is the \textbf{reward-sequence} of the trajectory.

\section{Symbolic Reward Machines}\label{SRM}
In this section, we propose our Symbolic Reward Machines (SRMs) as a novel representation of non-Markovian reward functions.
SRMs integrate aspects of a Reward Machine and a symbolic automaton \cite{DAntoni.2017,10.1007/978-3-642-39274-0_3}. The major difference between RMs and SRMs is the input. RMs get propositional symbols (labels) from a specified alphabet as the input. In contrast, SRMs receive the environment state directly and handle it through guards. These guards can be defined by the specified logic.

\begin{definition}[SRM]\label{definition_srm}
A Symbolic Reward Machine is a 5-tuple $C = (L,Q,q_0,\delta, \sigma)$ and consists of
a logic $L$ with formulas $\Psi$ and an input space $S$,
a finite set of states $Q$,
an initial state $q_0 \in Q$,
a transition relation $\delta \subseteq Q \times \Psi \times Q$, and
a partial output function $\sigma \colon Q \times \Psi \rightarrow \mathbb{R}$.
We write $p \xrightarrow{\varphi} q$ for a transition $(p, \varphi, q) \in \delta$.
\end{definition}
The input space $S$ is always given by the underlying MDP. The state space of the environment can be also defined by a function $S\colon \mathrm{Var} \rightarrow U$. Each component of the state is represented by a variable $x\in \mathrm{Var}$ that is mapped to a concrete value in a universe $U$.
Furthermore, we call $\varphi$ the guard of a transition. The SRM takes the transition if the current environment state $s$ fulfills the guard. This means that $s$ is a model of $\varphi$, which is expressed by $s \models \varphi$. Additionally, the SRM returns the reward from the output function $\sigma$.
We focus solely on SRMs with LRA (Linear Real Arithmetic) \cite{10.1007/978-3-319-45641-6_26} as the logic component in this paper. This is done for simplicity. However, the concepts should be easily extendable to other logics with the following two properties. The restriction for the logic is that the validity and satisfiability problem has to be decidable.

In addition, we require the following two properties for an SRM:
\begin{enumerate}
    \item An SRM is deterministic. The transition function $\delta$ has to be deterministic. Intuitively speaking, there are no two outgoing transitions of an SRM state that can be taken at the same time. This means that the guards of any two transitions of an SRM state cannot be fulfilled at the same time. In formula terms, it is required that for all $p,q,q' \in Q$ and formulas $\varphi, \varphi' \in \Psi$ it is true that, if $p \xrightarrow{\varphi} q$, and $p \xrightarrow{\varphi'} q'$, and $q$ is not equal to $q'$, then $\varphi \wedge \varphi'$ has to be unsatisfiable.
    \item An SRM is complete. Intuitively speaking, an SRM has to handle any possible input. This means that for all SRM states $p \in Q$ and every input $x$, there has to exist a transition  $p \xrightarrow{\varphi} q$ with $q \in Q$ and $\varphi \in \Psi$ and $x$ fulfills the guard $\varphi$.
\end{enumerate}

We also define an SRM run of an SRM $C=(L, Q, q_0, \delta, \sigma)$ that can process a complete state-sequence and outputs a reward-sequence. For that, first, we define the sequence of the states of the SRM while it processes the state-sequence $s_0,...,s_n$ of an trajectory $\tau$. The sequence of SRM states $q_0,...,q_{n}$ is defined by
\[
    q_i = \left\{
    \begin{array}{ll} 
        q_0 & \text{, if } i = 0, \\
        q & \text{, with } q\in Q.\ \exists \varphi \in \Psi.\ (q_{i-1},\varphi , q) \in \delta \wedge s_{i} \models \varphi \text{ , otherwise.}
     \end{array}\right.
\]
On this basis, the output sequence $r_1,...,r_{n}$ of the SRM $C$ is defined by
\[
    r_i = \sigma(q_{i-1}, \varphi) \text{ with } \varphi \in \Psi \wedge s_{i} \models \varphi \wedge (q_{i-1}, \varphi, q_i) \in \delta.
\]
On the basis of these two definitions, we define an SRM run of the SRM $C$ on the state-sequence $s_0,...,s_n$ of a trajectory $\tau$ by $C(\tau) = r_1,...,r_{n}$.

\subsection{Learning effective policies with given SRMs}
We can represent non-Markovian reward functions with our proposed notion of SRMs. However, they are not supported by the existing learning algorithms QRM \cite{Icarte.2018} and JIRP \cite{Xu_Gavran_Ahmad_Majumdar_Neider_Topcu_Wu_2020}. Consequently, we must introduce new learning algorithms that use SRMs to learn effective policies.
We propose QSRM as a learning algorithm that supports SRMs. QSRM is similar to QRM in that it also assumes that the user provides the required SRM. However, in contrast to QRM, which requires access to an explicit labeling function, QSRM only requires the current state and the reward of the environment. Thus, QSRM respects the standard environment interaction interface in RL. The procedure of QSRM is given as pseudocode in Algorithm \ref{alg:qsrm-learning}. It can be seen that the procedure uses the identical environment interaction as in Q-Learning \cite{WatkinsChristopherJ.C.H..1992}; the environment receives an action and emits the current state and a reward. In addition, QSRM uses the same multi-update scheme as QRM to accelerate the learning process.

\begin{algorithm}
\caption{Q-Learning for SRMs (QSRM)}\label{alg:qsrm-learning}
Initialize an $\epsilon$-greedy policy $\mu$, and initialize Q-tables $q_u$ for all $u\in Q_{SRM}$ for all (state, action)-pairs with 1.\\ Then iterate:
\begin{algorithmic}[1]
    \For{t in \{0,..., number\_of\_episodes - 1\}}
        \State Start new trajectory with $s$ = $s_I$ and $u=q_0$
        \While{$s$ is not terminal state}
            \State Sample action $a$ from $\mu$ given $s$ and $u$
            \State Execute action $a$ in environment and receive next state $s'$
            \For{$u' \in Q_{SRM}$}
                    \State Get $r$, $u''$ by $SRM(u', s')$
                    \State Update $q_{u'}$ by $q_{u'}(s,a) = q_{u'}(s,a) + \alpha_t (r+\gamma \cdot \max\limits_{a'}q_{u''}(s',a')-q_{u'}(s,a))$
            \EndFor
            \State Step in SRM with input $s'$
            \State $s=s'$ and $u=$ current SRM state
        \EndWhile
    \EndFor
    \State \Return policy $\pi(s,u)=argmax_a \ q_u(s,a)$ for all $s \in S$ and $u\in Q_{SRM}$
\end{algorithmic}
\end{algorithm}

The natural question is whether our new QSRM algorithm preserves the convergence property of Q-Learning, and thus, is also guaranteed to converge to an optimal policy.

\begin{theorem}[QSRM Convergence]\label{theorem_QSRM_convergence}
QSRM always converges to an optimal policy in the limit with the same conditions as Q-Learning \cite{WatkinsChristopherJ.C.H..1992,Sutton.2020}. So, if 
\begin{enumerate}
    \item all rewards are bounded;
    \item all (state, action)-pairs are visited infinity often;
    \item learning rates $0 \leq \alpha_t < 1$; and
    \item learning rates $\alpha_t$ are decreasing at the right rate:\[\sum_t^\infty \alpha_t = \infty \text{ and } \sum_t^\infty \alpha_t^2 < \infty.\]
\end{enumerate}
\end{theorem}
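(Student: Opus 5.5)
The plan is to reduce QSRM to ordinary Q-Learning on a product MDP and then invoke the convergence theorem of Watkins and Dayan \cite{WatkinsChristopherJ.C.H..1992}. Given the environment $M=(S,s_I,A,p,R)$ and the SRM $C=(L,Q,q_0,\delta,\sigma)$ that represents the non-Markovian reward, I build the product MDP $M_C=(S\times Q,(s_I,q_0),A,p',R')$ with
\[
p'((s,u),a,(s',u''))=p(s,a,s')\cdot[\,u''=\delta_C(u,s')\,]
\]
and Markovian reward $R'((s,u),a,(s',u''))=\sigma(u,\varphi)$, where $\varphi$ is the unique guard with $u\xrightarrow{\varphi}u''$ and $s'\models\varphi$. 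Determinism and completeness of $C$ guarantee that $\delta_C(u,s')$ is a well-defined function: completeness gives existence and determinism gives uniqueness of the successor. This makes $p'$ a valid transition kernel and $R'$ well defined. By the definition of an SRM run, the reward sequence of any trajectory in $M_C$ projected to $S$ equals $C(\tau)$, so the reward sequences agree. Hence optimal policies of $M_C$, which are of the form $\pi(s,u)$, are exactly the optimal history-dependent policies for $M$ that are implementable via the SRM state. This is the same argument used for QRM \cite{Icarte.2018}.

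Next I show that each update in Algorithm~\ref{alg:qsrm-learning} is a standard Q-Learning update on $M_C$. For a fixed $u'$, the line ``Get $r,u''$ by $SRM(u',s')$'' computes exactly $r=R'((s,u'),a,(s',u''))$ and the successor $u''$ distributed according to $p'$. The reason is that $s'\sim p(s,a,\cdot)$ does not depend on $u'$, so the sampled $s'$ is a legitimate sample from $(s,u')$ under $a$. Each individual update is therefore $q(x,a)\leftarrow q(x,a)+\alpha_t(r+\gamma\max_{a'}q(x',a')-q(x,a))$ with $x=(s,u')$ and $x'\sim p'(x,a,\cdot)$. The multi-update simply performs several such asynchronous updates from a single environment sample. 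Asynchronous Q-Learning convergence (Watkins--Dayan, or the stochastic-approximation proof of Tsitsiklis / Jaakkola et al.) permits any subset of pairs to be updated at each step, provided the updates use unbiased samples with conditionally independent noise given the past. Here the noise terms for different $u'$ are correlated, but this is harmless because each coordinate only needs its own noise to be a martingale difference with bounded variance.

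The main obstacle is matching the hypotheses exactly. I must check three things. First, rewards in $M_C$ are bounded, which holds because $\sigma$ takes finitely many values on the finite transition set, together with condition 1. Second, every $((s,u),a)$ is updated infinitely often, which follows from condition 2 read on $M_C$. With the multi-update, visiting $(s,a)$ infinitely often already updates $((s,u),a)$ for all $u$, which I would point out to weaken the requirement. Third, the learning-rate conditions 3 and 4 must hold per pair. Since one $\alpha_t$ is shared across all $u'$ updated at step $t$, and each pair is updated at infinitely many steps, I interpret $\alpha_t$ as indexed by the number of updates of that pair, as in \cite{WatkinsChristopherJ.C.H..1992}.

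With these checks, the Q-tables converge with probability one to $Q^*_{M_C}$, and the greedy policy returned by the algorithm is optimal. Terminal states are handled by taking them as absorbing with zero reward, or by using $\gamma<1$.
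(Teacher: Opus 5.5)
Your proposal is correct and follows the paper's route: build the product MDP on $S\times Q$ with the SRM-induced kernel and Markovian reward, then view QSRM as Q-Learning on it whose multi-update simply updates several product states in parallel with correctly computed rewards. You make explicit several points the paper only asserts: well-definedness of the kernel via determinism and completeness, that the shared sample $s'$ is a valid sample from every $(s,u')$, the asynchronous stochastic-approximation justification with correlated noise, and per-pair step sizes. One small caveat is that the paper's determinism condition only forces a unique successor, not a unique guard, so your ``unique guard'' in $R'$ rests on the same implicit well-definedness assumption that the paper's SRM-run definition makes.
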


\begin{proof}
We show the convergence property by building a cross product MDP $M_2=(S_2,s_{I_2}, A_2,p_2,R_2)$ with a Markovian reward function of an MDP $M=(S,s_I, A, p, R)$ where $R$ is defined by an SRM $C=(L,Q,q_0,\delta,\sigma)$. The following scheme is used to build $M_2$. The state space $S_2$ is the cross product $S \times Q$. The initial state $s_{I_2}$ is defined by the tuple $(s_I, q_0)$. The action set $A_2$ is taken from the action set $A$ from MDP $M$. The probabilistic transition function is defined by 
\[
    p_2((s,q), a, (s',q')) =
    \begin{cases}
    p(s,a, s') & \text{, if } \exists \varphi \in \Psi \text{ such that } (q,\varphi,q') \in \delta \wedge s' \models \varphi, \\
    0 & ,\text{ otherwise.}
    \end{cases}
\]
The reward function $R_2$ is defined by
\[
    R_2((s,p), a, (s',q)) = \sum_{\varphi \in \Psi} \llbracket s' \models \varphi \rrbracket \cdot \sigma(p, \varphi),
\]
whereby $\llbracket x \rrbracket$ is the Iverson bracket, which evaluates to 1 if x is true, and to 0 otherwise.
Q-Learning by Watkins et al. \cite{WatkinsChristopherJ.C.H..1992} converges for the MDP $M_2$ to the optimal policy. With the transformation procedure as defined above, QSRM does exactly the same as Q-Learning except that QSRM updates multiple states of the cross product MDP in parallel. This multi-update is applied by using the correct data of the reward function. Hence, QSRM also converges to the optimal policy with the same conditions as standard Q-Learning.\qed
\end{proof}

We also develop QSRM with function approximation methods and name it DQSRM. The core idea remains unchanged. Instead of Q-tables, neural networks are utilized to approximate the Q-function for each SRM state.

\section{Learning Symbolic Reward Machines}
QSRM provides the ability to use all existing environments out of the box, e.g., the environments of the popular Gymnasium library \cite{towers2025gymnasium}, as it respects the standard interaction scheme. However, QSRM is also not fully compatible with the standard RL definition because it assumes that the agent has access to the SRM and thus to the reward function. To overcome this limitation, we propose the LSRM algorithm that can automatically infer the SRM. LSRM can work in two settings. First, we assume that the possible guards are given by the user. We name this setting `LSRM with given formulas' (LSRM-GF).
In the second setting of LSRM, we lift this assumption. We name the second setting `LSRM with formula templates' (LSRM-FT).

The basic procedure is the same for both settings. It is given as pseudocode in Algorithm \ref{alg:LSRM}.
Like JIRP \cite{Xu_Gavran_Ahmad_Majumdar_Neider_Topcu_Wu_2020}, LSRM starts with a basic hypothesis SRM and updates it if counterexamples are found. The basic SRM consists only of one state and one self-loop. LSRM tries to learn a policy with (D)QSRM based on the current hypothesis SRM. The agent receives a reward from the environment and a reward from the current hypothesis SRM. These rewards are consistent if they are equal. If the rewards are not consistent, LSRM stops the (D)QSRM algorithm and adds the current trajectory to a set of counterexamples $\counterexamples$. We refer to the reward at timestep t of a counterexample $\counterexample \in \counterexamples$ as $r_{\counterexample,t}$, and to the observed state as $s_{e,t}$. Note that $r_{\counterexample,t}$ is only defined for $t>0$. Furthermore, we define a function $T$ that maps a counterexample $e$ to the set of all timesteps contained in that counterexample. We also need the set of timesteps excluding the last one. Therefore, we define a function $T^*$ with $T^*(\counterexample) = T(\counterexample) \setminus \{|T(\counterexample)|-1\}$. After adding the trajectory to the counterexamples, LSRM uses an SRM-inferring process to generate a new hypothesis SRM that is consistent with all counterexamples. Next, LSRM restarts the (D)QSRM algorithm with the new hypothesis SRM.

\begin{algorithm}
\caption{LSRM (Learning Symbolic Reward Machines)}\label{alg:LSRM}
    \begin{algorithmic}[1]
        \State obs = \{\}
        \State r = \{\}
        \State hypothesis\_srm = init\_basic\_srm()
        \State q\_tables = init\_q\_tables\_for\_srm(hypothesis\_srm)
        \For{number\_of\_episodes times}
            \State observations, rewards, rewards\_hypothesis\_srm = QSRM\_episode( hypothesis\_srm, q\_tables)
            \If{rewards $\neq$ rewards\_hypothesis\_srm}
                \State obs.append(observations)
                \State r.append(rewards)
                \State hypothesis\_srm = generate\_consistent\_srm(obs, r)
                \State q\_tables = init\_q\_tables\_for\_srm(hypothesis\_srm)
            \EndIf
        \EndFor
    \end{algorithmic}
\end{algorithm}

The SRM-inferring process is the core component of LSRM. It generates a consistent SRM from the set of counterexamples $\counterexamples$. The LSRM algorithm infers a consistent SRM by encoding the collected counterexamples into a constraint satisfaction problem. The encoding produces a logical formula $\varphi_{SRM}$ with two key properties: (1) the formula is satisfiable if, and only if, a consistent SRM exists, and (2) we can build a consistent SRM from a model of the formula. 
The required number of SRM states in the state set $Q$ is not known a priori. Therefore, the inferring process generates constraints for an increasing number of SRM states. At the beginning, it encodes an SRM with only one SRM state. So, it applies $Q=\{q_0\}$ where $q_0$ is w.l.o.g. the initial state. If no consistent SRM with $n$ SRM states can be found, the SRM-inferring process increases the number of SRM states to $n+1$, which results to the state set $Q=\{q_0,..., q_n\}$, and tries it again. As a result, it generates a consistent SRM with the fewest number of SRM states.

\subsection{LSRM with Given Formulas}
In LSRM with given formulas (LSRM-GF), we assume that the user provides a set of formulas $\Psi = \{\varphi_0, \dots, \varphi_{f-1}\}$ for the guards of the SRM. LSRM-GF uses four constraint types to encode the following properties of the SRM. First, the SRM has to be deterministic. To ensure this, LSRM-GF adds Constraint $\varphi_{det}$. Furthermore, the SRM is always in exactly one SRM state, and the first SRM state is always the initial state. This is modeled by Constraint $\varphi_{state}$. Moreover, the SRM has to be consistent with all counterexamples. LSRM-GF uses the constraints $\varphi_{trans}$ and $\varphi_{used}$ to model this property. The global formula $\varphi_{SRM}$ is the conjunction \[ \varphi_{SRM} = \varphi_{det} \wedge \varphi_{state} \wedge \varphi_{trans} \wedge \varphi_{used}\] of all these constraints. We will define all the components in detail in the following. First, we have to define the variables that are used to build the constraints.

\subsubsection{Variables}\label{LSRM_with_given_formulas_variables}
LSRM with given formulas uses:
\begin{itemize}
    \item Boolean variables $d_{p, \varphi , q}$ that encode whether a transition from state p to state q exists with the formula $\varphi$ as the guard;
    \item Boolean variables $x_{\counterexample, t, p}$ that encode whether the SRM is in state p after processing counterexample $\counterexample$ through the t-th observation; and
    \item Real variables $o_{p, \varphi}$ that encode the SRM output in state p and the guard $\varphi$ is fulfilled.
\end{itemize}

\subsubsection{Constraints}\label{LSRM_with_given_formulas_constraints}
On the basis of the variables described above, we define the constraints used in detail in the following.
The constraints are defined by:
\begin{align}
    &\varphi_{det} = \bigwedge_{p \in Q} \bigwedge_{q_1 \in Q} \bigwedge_{q_2 \in Q} \bigwedge_{\varphi_1 \in \Psi} \bigwedge_{\substack{\varphi_2 \in \Psi,\\ (p,\varphi_1,q_1) \\ \neq \\ (p,\varphi_2,q_2)}} d_{p, \varphi_1 , q_1} \wedge d_{p, \varphi_2 ,q_2} \Rightarrow \mathrm{Unsat} (\varphi_1 \wedge \varphi_2) \label{LSRM_with_given_formulas_constraint_det}\\[1em]
    &\varphi_{state} = \bigwedge_{\counterexample \in \counterexamples} x_{\counterexample,0,q_0} \bigwedge_{t \in T(\counterexample)} (\bigvee_{p\in Q} x_{\counterexample,t,p}) \wedge (\bigwedge_{p\in Q} \bigwedge_{\substack{q \in Q \\ p \neq q}} \neg (x_{\counterexample,t,p} \wedge x_{\counterexample,t,q})) \label{LSRM_with_given_formulas_constraint_state}\\[1em]
    &\varphi_{trans} = \bigwedge_{\counterexample \in \counterexamples} \bigwedge_{t \in T^*(\counterexample)} \bigwedge_{p \in Q} \bigwedge_{\varphi \in \Psi} \bigwedge_{q \in Q} x_{\counterexample,t,p} \wedge d_{p, \varphi , q} \wedge (s_{\counterexample,t+1}\models \varphi) \Rightarrow x_{\counterexample, t+1, q} \wedge o_{p, \varphi} = r_{\counterexample, t+1} \label{LSRM_with_given_formulas_constraint_trans}\\[1em]
    &\varphi_{used} = \bigwedge_{\counterexample \in \counterexamples} \bigwedge_{t \in T^*(\counterexample)} \bigvee_{p \in Q} \bigvee_{\varphi \in \Psi} \bigvee_{q \in Q} x_{\counterexample,t,p} \wedge d_{p, \varphi , q} \wedge (s_{\counterexample,t+1}\models \varphi). \label{LSRM_with_given_formulas_constraint_used}
\end{align}

The four constraints are used to encode in general two properties. Constraint \ref{LSRM_with_given_formulas_constraint_det} guarantees that the SRM is deterministic and the constraints \ref{LSRM_with_given_formulas_constraint_state}, \ref{LSRM_with_given_formulas_constraint_trans} and \ref{LSRM_with_given_formulas_constraint_used} encode the runs over all counterexamples, which leads to the property that the SRM is consistent with all these counterexamples.

The basic idea of Constraint \ref{LSRM_with_given_formulas_constraint_det} to encode the determinism property is the following. If there are any two different outgoing transitions in one SRM state that should be used, then the conjunction of both transition guards must be unsatisfiable. The usage of two outgoing transitions would be indicated by setting variables $d_{p, \varphi_1 , q_1}$ and $d_{p, \varphi_2 ,q_2}$ to true. Note that the Unsat term can be precalculated as the formulas are given a priori. This precalculation is essential as the use of a universal quantifier would be required otherwise. This universal quantifier would lead to a significant decrease of the SMT solver performance.

Constraint \ref{LSRM_with_given_formulas_constraint_state} encodes that the SRM is always in exactly one SRM state. In addition, naturally, the SRM must always start in the initial SRM state $q_0$. The variables $x_{e,0,q_0}$ ensure that the SRM starts in the initial state for all counterexamples. The remaining part of the constraint models the following two properties. First, the SRM is always in at least one state. That is modeled by the discjunction over all variables $x_{e,t,p}$ for that counterexample. Second, the SRM cannot be in two states at the same time. That is modeled by the negation $\neg (x_{e,t,p} \wedge x_{e,t,q})$.

Constraint \ref{LSRM_with_given_formulas_constraint_trans} encodes consistency across all counterexamples.
For each state in the state-sequence of a counterexample, the SRM has to output the observed reward of the reward-sequence.
The SRM is in one specific state $p$ at timestep $t$ during the processing of counterexample $e$. If there is an outgoing transition with the guard $\varphi$ to state $q$ and the observed environment state of the counterexample is a model of this guard, the SRM takes this transition. This means that the SRM is in state $q$ in the next timestep. In addition, the generated reward of the SRM, which is encoded by $o_{p,\varphi}$, has to be equal to the observed reward in the counterexample $\counterexample$.

Constraint \ref{LSRM_with_given_formulas_constraint_used} guarantees that for each step in the counterexample, the SRM has to be in one state $p$, there is a transition from a state $p$ to a state $q$ with the guard $\varphi$, and the observation of the step in the counterexample is a model of the guard $\varphi$. In other words, we need a step that the SRM takes. So, it guarantees that one premise of the implications in Constraint \ref{LSRM_with_given_formulas_constraint_trans} is valid.

All these constraints guarantee that a consistent SRM can be generated by finding a model for the global formula $\varphi_{SRM}$. Next, we describe how LSRM-GF generates a consistent SRM given a model of $\varphi_{SRM}$.

\subsubsection{Generate SRM from Model}\label{LSRM_GF_Generate_SRM_from_Model}
The SMT solver returns a model of $\varphi_{SRM}$, if and only if, $\varphi_{SRM}$ is satisfiable. We assume w.l.o.g. that the SRM states form the set $Q=\{q_0,q_1,...,q_n\}$ where $q_0$ denotes the initial state. LSRM-GF can derive a consistent SRM $C = (L,Q,q_0,\delta, \sigma)$ from the model. The logic $L$ is set to LRA (see Section \ref{SRM}). The output function $\sigma$ is defined by $\sigma(p, \varphi) = o_{p, \varphi}$ for all $p \in Q$ and $\varphi \in \Psi$.
To generate the transition relation $\delta$, LSRM-GF adds $(p,\varphi , q)$ to $\delta$ for all $p,q \in Q$ and $\varphi \in \Psi$ if $d_{p, \varphi, q}$ is set to true in the model.

With this procedure, LSRM-GF constructs a consistent SRM based on a model for $\varphi_{SRM}$. However, the resulting SRM is not guaranteed to be complete. It is possible that for some inputs, the learned SRM does not have a transition with a fulfilled guard. Thus, it is not a valid SRM for our definition in Section \ref{SRM}.
To achieve this property while preserving quantifier freedom in the constraints, we first invoke the SMT solver without further constraints and then add a post-processing step afterwards. We process each SRM state $p\in Q$ with the following procedure. We define a formula $\varphi_{p,out}$ that is the negated disjunction of all outgoing guards. Therefore, first, we define the set of formulas $X^p = \{\varphi | \varphi \in \Psi \wedge \exists q\in Q. \ d_{p,\varphi,q} = True\}$. The set $X^p$ includes all formulas that are used as a guard for the SRM state $p$, which is indicated by a variable $d_{p,\varphi,q}$ is set to true. Based on the formula set $X^p$, the formula $\varphi_{p,out}$ is defined by \[\varphi_{p,out} = \neg \bigvee_{\varphi \in X^p} \varphi.\] If $\varphi_{p,out}$ is satisfiable, we add a self-loop transition with $\varphi_{p,out}$ as the guard and an output of zero. This is done by extending the set $\Psi$ of the generated SRM with $\varphi_{p,out}$, adding $(p,\varphi_{p,out},p)$ to $\delta$ and letting $\sigma(p, \varphi_{p,out})$ maps to 0. This procedure works because we only want to generate a consistent SRM. No counterexample uses the transition that we add because constraints \ref{LSRM_with_given_formulas_constraint_trans} and \ref{LSRM_with_given_formulas_constraint_used} would force a transition that covers the example, otherwise. As a result of this post-processing step, LSRM-GF generates a fully working SRM that is consistent with all counterexamples.

\subsection{LSRM with Formula Templates}
LSRM-GF still has the limitation that the user must provide the concrete formula set $\Psi$. This assumption is not feasible for many practical tasks because the user must have an extensive domain expertise about the environment and the task. To address this limitation, we lift this assumption in the second setting that we name LSRM with formula templates (LSRM-FT). LSRM-FT can learn an SRM without any further information except the standard RL environment output, which consists of an environment state and a reward. This makes LSRM-FT compatible with all existing environments and the standard RL definitions. LSRM-FT automatically infers for each SRM state the required formulas for the guards. This means that LSRM-FT maintains a formula set $\Psi^p$ for each SRM state $p\in Q$. Let $f \in \mathbb{N}$ denote the number of formulas that can be used as guards for an SRM state. There are two options to handle the number of formulas $f$. The first one is to increase the number of states and the number of formulas in parallel in order to search for a model. The second option is that LSRM-FT handles the number of formulas per state as a hyperparameter. This option is computationally faster and is guaranteed to find the SRM with the fewest number of SRM states for the specified number of formulas. In the evaluation, we will use the second option. By using $f$ formulas per SRM state, $\Psi^p$ is defined by $\Psi^p = \{\varphi_{0}^p, ..., \varphi_{f-1}^p\}$. As the algorithm's name suggests, LSRM-FT uses formula templates as fixed building blocks for the formulas in $\Psi^p$. Formula templates are formulas that contain free variables besides those representing the inputs of the SRM. As an example, one formula template could be $\varphi = x \geq b_1 \wedge x < b_2$. This formula template could be used to find an interval for a component $x$. LSRM-FT infers suitable values for the additional free variables; in the given example, for $b_1$ and $b_2$. This means that LSRM-FT concretizes a formula template. A concretized version of the example formula template could be $\varphi = x \geq 2 \wedge x < 3$. Inferring formulas from templates is a significant improvement to overcome the limitation that the user has to provide the concrete formulas, and thus, information about the reward structure. In principal, every formula can be used as a formula template.

The basic idea of LSRM-FT is the same as for LSRM-GF. LSRM-FT uses constraints to guarantee the same properties as LSRM-GF. The SRM is deterministic, the SRM is always in exactly one SRM state, and the SRM is consistent with all counterexamples. Therefore, the constraints $\varphi_{det}$, $\varphi_{state}$, $\varphi_{trans}$, and $\varphi_{used}$ are adapted from LSRM-GF to LSRM-FT.

In the following, first, we define the needed variables again. Afterwards, we define and describe the required adjustments of the constraints.

\subsubsection{Variables}
The general idea of encoding a consistent SRM is the same as in LSRM with given formulas. Hence, LSRM with formula templates requires similar variables for the structure and the runs over the counterexamples. We adapt the variables $d_{p, \varphi, q}$, $x_{e,t,p}$, and $o_{p,\varphi}$ as defined for LSRM-GF in Section \ref{LSRM_with_given_formulas_variables}. 
So, LSRM-FT uses:
\begin{itemize}
    \item Boolean variables $d_{p, \varphi^p , q}$ that encode whether a transition from state p to state q exists with the formula $\varphi^p$ as the guard;
    \item Boolean variables $x_{\counterexample, t, p}$ that encode whether the SRM is in state p after processing counterexample $\counterexample$ through the t-th observation; and
    \item Real variables $o_{p, \varphi^p}$ that encode the SRM output in state p and the guard $\varphi^p$ is fulfilled.
\end{itemize}

In addition to these variables, LSRM-FT uses further variables for the formula templates that are used to build the guards. The required variables depend on the specific formula template that is used. We will use a box formula template in the evaluation of our methods. A box formula template can describe an interval for each component of the environment state. We described in Section \ref{SRM} that the state space can also be defined by a function $S \colon \mathrm{Var} \rightarrow U$. On the basis of this definition, the box formula template specifies an interval for each $x \in \mathrm{Var}$. We also allow a negated box formula template, whereby the complete specified box is negated. Consequently, the box formula template is defined by \[\varphi_{box} = pos_{box} \Leftrightarrow \bigwedge_{x \in \mathrm{Var}} x \geq b_{x_\geq} \wedge x < b_{x_<},\] where $pos_{box}$ indicates whether it is a box formula template or a negated box formula template. This formula template is used in all SRM states and guards. The following variables are required to define the formula templates for the guards:

\begin{itemize}
    \item Boolean variables $pos^p_i$ that encode whether the formula template of formula $\varphi^p_i$ of state $p$ appears in positive or negated form; and
    \item Real variables $b^p_{i, x_j}$ that encode the right sides of the inequality equations of the box template for formula $\varphi^p_i$ and the environment state component $x\in \mathrm{Var}$ and the operator $j \in \{ \geq, <\}$.
\end{itemize}

On the basis of these variables and our general concept of box templates, we can define the formula templates for each SRM state $p$. The set of formula templates for a state $p$ is $\Gamma^p = \{\psi^p_0, ..., \psi^p_{f-1}\}$, where each formula template is defined by
\[ \psi^p_i = pos^p_i \Leftrightarrow \bigwedge_{x \in \mathrm{Var}} x \geq b^p_{x_\geq} \wedge x < b^p_{x_<}.\]
The number of formula templates associated with each SRM state is given by $|\Gamma^p|$ and corresponds to the number of formulas $f$ that LSRM-FT can use in each SRM state.
In the naive approach, each guard is constructed directly from its formula template by setting $\varphi^p_i = \psi^p_i$.

\subsubsection{Constraints}
The constraints of LSRM-FT are adapted from the constraints of LSRM-GF in Section \ref{LSRM_with_given_formulas_constraints}. The modified constraints are:
\begin{align}
    &\varphi_{det} = \bigwedge_{p \in Q} \bigwedge_{q_1 \in Q} \bigwedge_{q_2 \in Q} \bigwedge_{\varphi_1 \in \Psi^p} \bigwedge_{\substack{\varphi_2 \in \Psi^p,\\(p,\varphi_1,q_1)\\\neq\\(p,\varphi_2,q_2)}} d_{p, \varphi_1 , q_1} \wedge d_{p, \varphi_2 ,q_2} \Rightarrow \mathrm{Unsat} (\varphi_1 \wedge \varphi_2) \label{LSRM_Fromula_Templates_Constraint_det}\\[1em]
    &\varphi_{state} = \bigwedge_{\counterexample \in \counterexamples} x_{\counterexample,0,q_0} \bigwedge_{t \in T(\counterexample)} (\bigvee_{p\in Q} x_{\counterexample,t,p}) \wedge (\bigwedge_{p\in Q} \bigwedge_{\substack{q \in Q \\ p \neq q}} \neg (x_{\counterexample,t,p} \wedge x_{\counterexample,t,q})) \label{LSRM_Formula_Templates_Constraint_state}\\[1em]
    &\varphi_{trans} = \bigwedge_{\counterexample \in \counterexamples} \bigwedge_{t \in T^*(\counterexample)} \bigwedge_{p \in Q} \bigwedge_{\varphi \in \Psi^p} \bigwedge_{q \in Q} x_{\counterexample,t,p} \wedge d_{p, \varphi , q} \wedge \mathrm{substitute}(\varphi, s_{\counterexample,t+1}) \Rightarrow x_{\counterexample, t+1, q} \wedge o_{p, \varphi} = r_{\counterexample, t+1} \label{LSRM_Formula_Templates_Constraint_trans}\\[1em]
    &\varphi_{used} = \bigwedge_{\counterexample \in \counterexamples} \bigwedge_{t \in T^*(\counterexample)} \bigvee_{p \in Q} \bigvee_{\varphi \in \Psi^p} \bigvee_{q \in Q} x_{\counterexample,t,p} \wedge d_{p, \varphi , q} \wedge \mathrm{substitute}(\varphi, s_{\counterexample,t+1}). \label{LSRM_Formula_Templates_Constraint_used}
\end{align}

There are two differences in the constraints. First, we replace each $\Psi$ with $\Psi^p$ because LSRM-FT uses a separate formula set per SRM state. And second, we cannot precalculate $s_{\counterexample, t+1} \models \varphi$. Instead, we use the function $\mathrm{substitute}$ that performs substitution of the variables $x \in \mathrm{Var}$ through the observed environment state $s_{\counterexample, t+1}$. However, one challenge remains. In Section \ref{LSRM_with_given_formulas_constraints}, we described that we can precalculate $\mathrm{Unsat}(\varphi_1 \wedge \varphi_2)$, and thus, eliminate a large part of the work for the SMT solver. However, this approach will not work for LSRM-FT. The formulas $\varphi_1$ and $\varphi_2$ are now formula templates and LSRM-FT should infer values for these templates. So, we cannot precalculate the unsatisfiability anymore. The naive approach would be to add a universal quantifier. Instead of $\mathrm{Unsat}(\varphi_1 \wedge \varphi_2)$, it would use $\forall s\in S.\ \neg (\mathrm{substitute}(\varphi_1, s) \wedge \mathrm{substitute}(\varphi_2, s))$. However, this would lead to a major degeneration in the performance of the SMT solver.

To overcome this performance issue, we propose another approach to guarantee determinism without introducing quantifiers. As a result, our version retains the quantifier freedom. Currently, each guard formula $\varphi^p_i \in \Psi^p$ of an SRM state $p$ is defined by its formula template $\psi^p_i \in \Gamma^p$. So, $\varphi^p_i = \psi^p_i$ applies. However, as we mentioned, it is possible that two guard formulas $\varphi^p_1, \varphi^p_2 \in \Psi^p$ are satisfiable at the same time ($\varphi^p_1 \wedge \varphi^p_2$ is satisfiable). Hence, we have to adjust the guard formulas that we can guarantee that two of them are not satisfiable at the same time. We reach this property by using the following mechanism. Instead of the guard $\varphi^p_i$ only consists of the formula template $\psi^p_i$, it consists of all formula templates in $\Gamma^p$. The idea is that the related formula template $\psi^p_i$ should be fulfilled and all other formula templates $\psi^p_j$ with $i \neq j$ are not fulfilled. Consequently, the guard formula $\varphi^p_i \in \Psi^p$ is defined by
\[\varphi^p_i = \psi^p_i \bigwedge_{\substack{\psi^p_j \in \Psi^p \\i\neq j}} \neg \psi^p_j.\]
This definition always guarantees determinism. Let us take any two guard formulas $\varphi^p_1 , \varphi^p_2 \in \Psi^p$ with $\varphi^p_1 \neq \varphi^p_2$. To fulfill $\varphi^p_1$, $\psi^p_1$ has to be fulfilled. However, $\psi^p_1$ appears in negated form in $\varphi^p_2$. As a result, $\varphi^p_2$ cannot be fulfilled. This also applies vice versa and for each guard formula pair and for each SRM state. The remaining part is to force that there are no two outgoing transitions that uses the same guard because it would be non deterministic otherwise. Therefore, we rewrite the determinism constraint $\varphi_{det}$ to \[\varphi_{det} = \bigwedge_{p \in Q} \bigwedge_{q_1 \in Q} \bigwedge_{\substack{q_2 \in Q \\ q_1 \neq q_2}} \bigwedge_{\varphi \in \Psi^p} \neg(d_{p, \varphi , q_1} \wedge d_{p, \varphi ,q_2}) \label{LSRM_Fromula_Templates_Constraint_new_det}.\] Consequently, the determinism property of the SRM is always guaranteed.
The global formula $\varphi_{SRM}$ remains as before:\[\varphi_{SRM} = \varphi_{det} \wedge \varphi_{state} \wedge \varphi_{trans} \wedge \varphi_{used}.\]

 \subsubsection{Generate SRM from Model}
 LSRM-FT can generate a consistent SRM from a given model for the global formula $\varphi_{SRM}$. First, it sets the inferred variables for the formula templates $\Gamma^p$ for each SRM state p. The remaining procedure is similar as the procedure described in Section \ref{LSRM_GF_Generate_SRM_from_Model} for LSRM-GF. The only difference is that we replace every $\Psi$ with a $\Psi^p$ because LSRM-FT relies on a formula set per SRM state.

\subsection{LSRM Convergence}\label{LSRM_Convergence}
In this section, we analyze the convergence properties of our two LSRM methods. Therefore, we analyse two convergence properties for each method in the following. First, whether LSRM almost surely learns an equivalent SRM, and second, whether LSRM learns an optimal policy. We say that two SRMs $C_1$ and $C_2$ are almost surely equivalent for an MDP $M$, noted with $C_1 \simeq C_2$, if for all trajectories $\tau$ with a non negative probability to occur, the generated reward-sequences $C_1(\tau)$ and $C_2(\tau)$ are equal. In particular, the probability of the set of all trajectories with $C_1(\tau) \neq C_2(\tau)$ is zero.

The first result is that all our methods which rely on DQSRM do not converge to an optimal policy. The underlying method is based on all three elements (function approximation, bootstrapping, and off-policy training) of the deadly triad \cite{Sutton.2020}.

\subsubsection{Convergence of LSRM with Given Formulas}
In the following, we describe the convergence properties of LSRM-GF. Therefore, first, we show that the learned SRM is almost surely equivalent to the one used in the environment. Afterwards, we argue that LSRM converges to an optimal policy.

\begin{theorem}[LSRM-GF convergence to equivalent SRM]\label{Theorem_LSRM_Given_Formulas_Convergence_Equivalent_SRM}
LSRM with given formulas converges to an almost surely equivalent SRM in the limit if all state-sequences are observed infinitely often and the given formulas are sufficient to build an equivalent SRM.
\end{theorem}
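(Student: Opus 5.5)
The argument is the standard JIRP-style finite-hypothesis argument, adapted to the SMT encoding of LSRM-GF. Fix the (unknown) SRM $C^*$ used by the environment, and let $n^*$ be the number of states of a smallest SRM over the given formula set $\Psi$ that is almost surely equivalent to $C^*$. Such an SRM exists by the assumption that the given formulas suffice. The plan has three steps: (i) bound the size of every hypothesis LSRM-GF can produce; (ii) show that each hypothesis change permanently excludes the previous hypothesis, so only finitely many changes happen; (iii) show that the final, stable hypothesis is almost surely equivalent to $C^*$.

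\textbf{Step (i): size bound.} We first argue that any SRM almost surely equivalent to $C^*$ is consistent with every counterexample. A counterexample is an observed trajectory, so it occurs with positive probability, and its reward-sequence was produced by $C^*$. Consequently, an equivalent SRM $C'$ over $\Psi$ with $n^*$ states gives, for every finite $\counterexamples$, a model of $\varphi_{SRM}$ at size $n^*$:
\begin{itemize}
\item set $d_{p,\varphi,q}$ according to the transitions of $C'$, and $o_{p,\varphi}=\sigma'(p,\varphi)$;
\item set $x_{\counterexample,t,p}$ according to the run of $C'$ on each counterexample.
\end{itemize}
We check the constraints in turn. Constraint~(\ref{LSRM_with_given_formulas_constraint_det}) holds by determinism of $C'$. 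Constraint~(\ref{LSRM_with_given_formulas_constraint_state}) holds because a run is in exactly one state at each step. Constraints~(\ref{LSRM_with_given_formulas_constraint_trans}) and~(\ref{LSRM_with_given_formulas_constraint_used}) hold by completeness and consistency of $C'$. Because the inference procedure tries sizes $1,2,\dots$ in increasing order, every hypothesis has at most $n^*$ states. With $|\Psi|$ fixed, the transition structures on at most $n^*$ states form a finite set.

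\textbf{Step (ii): finitely many changes.} Conversely, any model of $\varphi_{SRM}$, after the completion post-processing of Section~\ref{LSRM_GF_Generate_SRM_from_Model}, is consistent with all of $\counterexamples$. The added self-loops are never used on counterexamples, by~(\ref{LSRM_with_given_formulas_constraint_trans}) and~(\ref{LSRM_with_given_formulas_constraint_used}). A hypothesis $H$ that produced counterexample $\counterexample$ disagrees with $\counterexample$, and counterexamples are never removed. Hence no SRM that agrees with $H$ on $\counterexample$ can be returned again. In particular, the same transition structure together with the same outputs on the used transitions is excluded forever.

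The main obstacle sits here, because the outputs $o_{p,\varphi}$ are real-valued, so the hypothesis space is not literally finite. I would handle this by noting that only finitely many distinct reward values arise. The consistency constraints force every output on a used transition to equal a reward actually occurring in $C^*$'s outputs: these are the finitely many values $\sigma^*(p,\varphi)$. Unused outputs are irrelevant to the SRM's behaviour on the observed state-sequences. So, up to behaviour on trajectories, the hypotheses range over finitely many classes. Each class is eliminated once it yields a counterexample, and this bounds the number of hypothesis changes.

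\textbf{Step (iii): the final hypothesis.} After the last change, the hypothesis $H$ is fixed forever. Suppose $H\not\simeq C^*$. Then there is a set of trajectories of positive probability on which the reward-sequences differ. Since only countably many finite state-sequences matter in the finite case, some single state-sequence on which they differ has positive probability. By the assumption that every state-sequence is observed infinitely often, that sequence is eventually observed under $H$ and triggers a counterexample. This contradicts the stability of $H$, so $H\simeq C^*$.

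In the continuous case I would phrase the same argument via the hypothesis that every positive-probability disagreement event is eventually observed. This is exactly how I read the statement's assumption that ``all state-sequences are observed infinitely often''.
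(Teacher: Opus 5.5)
Your proof is correct in substance but takes a genuinely different route from the paper.

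The paper argues by contradiction from a single witness. A positive-probability trajectory $\tau$ on which the current hypothesis and the environment SRM disagree is eventually observed and added to $E$, after which every hypothesis is consistent with $\tau$. The bulk of its proof is then a constraint-by-constraint verification that $\varphi_{SRM}$ has a model if and only if a consistent SRM exists, with the post-processing supplying completeness. It never bounds the size of hypotheses or the number of hypothesis changes.

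You give the JIRP-style stabilization argument instead. Minimality of the incremental search, together with the completeness direction of the encoding, caps every hypothesis at $n^*$ states; your Step (i) is the paper's forward direction instantiated at $C'$. Each counterexample permanently rules out the hypothesis that produced it, so updates stop, and a hypothesis that never changes again must be almost surely equivalent. Your route supplies what the paper's argument leaves implicit: repairing each individual witness does not by itself prevent an infinite sequence of hypotheses, each creating fresh disagreements. The paper's route, in turn, gives a detailed proof of both directions of the encoding, which you use (soundness in Step (ii)) but only sketch.

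One step needs tightening. The hypotheses are \emph{not} finitely many up to behaviour on trajectories. Outputs $o_{p,\varphi}$ on transitions that no counterexample uses are arbitrary reals chosen by the solver. They do affect behaviour on not-yet-observed trajectories, which is where future counterexamples come from.

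Classify a hypothesis instead by its $d$-assignment together with a map on outputs: send $o_{p,\varphi}$ to itself if it lies in the finite set $R^*$ of output values of $C^*$, and to a single symbol $\bot$ otherwise. For $|Q|\le n^*$ there are finitely many such classes. Suppose $H$ produced the counterexample $e$. Any $H'$ in the same class has the same run on $e$, since the completion self-loops depend only on the $d$-assignment. So either this run uses a completion self-loop, which $\varphi_{used}$ forbids once $e\in E$, or at the first wrong step $H'$ outputs either the same wrong value or a value outside $R^*$. Neither equals $r_{e,t+1}$. Hence every class is excluded forever after its first counterexample, which is what your counting needs.

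Like the paper, your check of $\varphi_{det}$ for $C'$ uses that \emph{all} distinct outgoing guards of a state are pairwise unsatisfiable. This is stronger than the determinism required in the SRM definition when two guards share a target state, so it is cleanest to build it into the sufficiency assumption.
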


\begin{proof}
We prove the theorem by contradiction. Assume that the SRM $C_1$ used by LSRM and the SRM $C_2$ used in the environment are not almost surely equivalent, and let trajectory $\tau$ be the evidence for that. This means that $C_1(\tau) \neq C_2(\tau)$, and the probability of observing $\tau$ is greater than zero. The RL setup guarantees that such a trajectory $\tau$ will be observed in the limit. Furthermore, let w.l.o.g. epoch $m$ be the first epoch in which $\tau$ occurs. At this point, LSRM stops the learning process and adds $\tau$ to the set of counterexamples $\counterexamples$. Subsequently, LSRM infers a new SRM that is consistent with all counterexamples and therefore is also consistent with $\tau$. Hence, from episode $m+1$ onward, the SRM is consistent for $\tau$ with the SRM used in the environment. As a result, the SRM inferred by LSRM is almost surely equivalent to the SRM used in the environment in the limit. 

The remaining part to show is that LSRM-GF guarantees to generate a consistent SRM from the counterexamples $\counterexamples$. Therefore, we must show that if a consistent SRM exists, then there also exists a model for the formula $\varphi_{SRM}$ and vice versa. We start by proving the forward implication.
This means that a consistent SRM $C=(L, Q, q_0, \delta, \sigma)$ is given and we have to show that $\varphi_{SRM} = \varphi_{det} \wedge \varphi_{state} \wedge \varphi_{trans} \wedge \varphi_{used}$ is satisfiable by constructing a model for it. Since the formula is a conjunction of multiple subformulas, it is sufficient to show that all subformulas are fulfilled by a model.

First, we consider subformula $\varphi_{det}$ and show its satisfiability. We set all variables $d_{p,\varphi,q}$ to true if $(p,\varphi,q) \in \delta$ and to false otherwise. These assignments also guarantee that all $d_{p,\varphi,q}$ are set. Since the SRM is deterministic, it holds that if there exist two distinct transitions $(p,\varphi_1,q_1), (p,\varphi_2,q_2) \in \delta$, then $\varphi_1$ and $\varphi_2$ are not satisfiable at the same time. This also means that if any two different $d_{p,\varphi_1,q_1}, d_{p,\varphi_2,q_2}$ variables are set to true, then the conjunction of $\varphi_1$ and $\varphi_2$ is unsatisfiable. This results in the formula $\varphi_{det}$ evaluating to true in the case that two $d_{p,\varphi,q}$ variables are true. Furthermore, if at least one of any two variables $d_{p,\varphi_1,q_1}, d_{p,\varphi_2,q_2}$ is false, then the premise is false and thus the implication is true. This results in the property that $\varphi_{det}$ evaluates to true in all cases.

The second subformula is $\varphi_{state}$. To show that $\varphi_{state}$ is satisfiable, let us pick an arbitrary counterexample $\counterexample \in \counterexamples$. In addition, let $n$ denote the length of the state-sequence of the counterexample. As we described in Section \ref{SRM}, the run of the SRM $C$ induces a sequence of SRM states $q_0,...,q_n$. We set $x_{\counterexample, t,p}$ to true if $p = q_t$ and to false otherwise. Since $q_0$ is always the first SRM state, as it is the initial state of the SRM, the first part of the formula $\varphi_{state}$ is true. Furthermore, the SRM is in every timestep exactly in one SRM state. At timestep $t$, the SRM is in state $q_t$. This also means that for each $t$, we set only one variable $x_{\counterexample, t, p}$ for counterexample $\counterexample$ to true and for all other SRM states to false. Consequently, the second part of the formula, which describes that only one $x_{\counterexample, t, p}$ can be true at each timestep for a counterexample, evaluates to true under the assignments we made. We make these assignments for each counterexample and thus the complete formula $\varphi_{state}$ evaluates to true.

The remaining subformulas are $\varphi_{trans}$ and $\varphi_{used}$. We already set all variables $d_{p,\varphi,q}$ and $x_{\counterexample, t, p}$. They are fixed and cannot be changed in our proof anymore. We consider any counterexample $\counterexample \in \counterexamples$ again. This counterexample consists of a state-sequence $s_0,...,s_n$ and a reward sequence $r_1,...,r_n$. The run of the SRM $C$ induces a sequence of SRM states $q_0,...,q_n$ again. First, we show that for each timestep $t$, $\varphi_{used}$ is true. The SRM is in state $q_t$ in timestep $t$. In $\varphi_{state}$, we set the variable $x_{e,t,q_t}$ to true. Furthermore, the SRM transitions to state $q_{t+1}$ by using a transition with a guard $\varphi_t$ and the observed state $s_{e,t+1}$, which is the input of the SRM, fulfills this guard. We set in $\varphi_{det}$ all $d_{p,\varphi,q}$ to true if and only if the SRM has a transition from state p to state q with guard $\varphi$. This means that for the current timestep $t$, we set the variable $d_{q_t,\varphi_t,q_{t+1}}$ to true in $\varphi_{det}$. In addition, the observed state $s_{e,t+1}$ fulfills the guard. Hence, $s_{e,t+1} \models \varphi_t$ also holds. Consequently, $\varphi_{used}$ entirely evaluates to true. 

To prove that $\varphi_{trans}$ evaluates to true, the remaining part to show is that only one premise is true and for the true premise, the conclusion is correct. We already argued that for each counterexample $\counterexample$ and timestep $t$, the SRM is in exactly one SRM state $q_t$. This means that for all other states the premise is wrong because for all other states, we set the variable $x_{e,t,p}$ to false. In addition, the SRM is deterministic. This means that there is only one outgoing transition that is fulfilled by the observed state. So, either we set the variable $d_{p,\varphi,q}$ to false in $\varphi_{det}$ or the observed state $s_{e,t+1}$ is not a model of $\varphi$ and thus $s_{e,t+1} \models \varphi$ evaluates to false. As a result, there is exactly one premise in $\varphi_{trans}$ that is true for each counterexample $\counterexample$ and timestep $t$. For this valid premise, we have to prove that the conclusion is correct. The SRM is for counterexample $\counterexample$ and timestep $t$ in state $q_t$. In addition, the SRM used the transition $(q_t, \varphi_t, q_{t+1})$ and thus the observed state $s_{\counterexample, t+1}$ fulfills $\varphi_t$. Moreover, the SRM outputs the reward $r_{\counterexample,t}$. Consequently, the SRM is in state $q_{t+1}$ in timestep $t+1$. We set the variable $x_{e, t+1, q_{t+1}}$ to true in $\varphi_{state}$. The remaining part to show is that $o_{q_t,\varphi_t} = r_{\counterexample, t+1}$ evaluates to true. We know that the SRM always outputs for the transition it took the same reward that is the observed reward $r_{\counterexample,t+1}$. Thus, we can set variable $o_{q_t,\varphi_t}$ to the reward $r_{\counterexample,t+1}$ and thus the complete conclusion evaluates to true. We never try to set $o_{q_t,\varphi_t}$ to two different values because then the SRM has to output two different values for the same SRM state and guard. This is not possible because one formula can only be used one time in an SRM state as a guard. Otherwise, the SRM is not deterministic anymore. As a result, the complete formula $\varphi_{trans}$ evaluates to true for our assignments.

The remaining variables, which are only variables $o_{p,\varphi}$, can be set to zero in order to obtain a complete model where every variable has an assigned value. Since all subformulas are fulfilled by our constructed model, the global formula $\varphi_{SRM}$ evaluates to true. This shows that whenever a consistent SRM exists, there also exists a model satisfying $\varphi_{SRM}$.

Next, we show that if there exists a model $m$ of the formula $\varphi_{SRM}$, then there exists a consistent SRM.
We have to show the following four properties:
\begin{enumerate}
    \item SRM is deterministic;
    \item SRM always starts in $q_0$;
    \item SRM is consistent with all $\counterexample \in \counterexamples$; and
    \item SRM is complete: This is directly given by the argumentation of the post-processing step in Section \ref{LSRM_GF_Generate_SRM_from_Model}.
\end{enumerate}
The SRM is constructed as described in Section \ref{LSRM_GF_Generate_SRM_from_Model}.

First, we start to show that the constructed SRM is deterministic. We only add transitions $(p,\varphi,q)$ to $\delta$ if $d_{p,\varphi,q}$ is set to true. Since $\varphi_{det}$ evaluates to true, the conjunction of the guards of any two outgoing transitions in an SRM state is unsatisfiable. Thus, the generated SRM is always deterministic.

The second property that we have to prove is that the SRM always starts in the initial state $q_0$. In the model $m$, every variable $x_{\counterexample,0,q_0}$ is set to true. According our construction of an SRM, we set this state to the initial state of the SRM, and thus, the SRM always starts in the initial state.

The most important property to prove is the third one, which claims that the SRM is consistent with all counterexamples. Let us pick any counterexample $\counterexample \in \counterexamples$ that consists of the state-sequence $s_0,...,s_n$ and the reward-sequence $r_1,...,r_n$. Next, we have to show that the SRM is consistent with this counterexample. The formula $\varphi_{state}$ guarantees that the SRM is always in exactly one SRM state at every timestep $t$ and that the first SRM state is the initial state. It is important that the generated SRM always transitions to the next correct state that is specified with the variable $x_{\counterexample, t, p}$ and that the SRM outputs the observed return. Let us assume that the SRM is in state $p$ after processing the counterexamples $\counterexample$ trough the t-th observation. We have to show that the SRM transitions to the next state that is indicated as the variable $x_{\counterexample, t+1, q}$ is set to true in the model. The formula $\varphi_{used}$ guarantees that there is a variable $d_{p,\varphi,q}$ set to true and the observed state fulfills the guard $\varphi$. As our construction, we add a transition $(p,\varphi,q)$ to $\delta$. Since we already showed that the SRM is deterministic, the SRM has exactly one transition that it can take. Furthermore, the next SRM state is $q$, which corresponds to the variable $x_{\counterexample, t+1, q}$ is set to true in the model. The last thing to check is whether the SRM outputs the correct reward $r_{\counterexample,t+1}$. The formula $\varphi_{trans}$ guarantees that the model variable $o_{p,\varphi}$ is set to $r_{\counterexample, t+1}$. In the construction of the SRM, the output function $\sigma$ is defined by $\sigma(p, \varphi) = o_{p, \varphi}$ for all $p \in Q$ and $\varphi \in \Psi$. Consequently, the SRM outputs the correct reward $r_{\counterexample,t+1}$. As a result, if there exists a model $m$ of the formula $\varphi_{SRM}$, then there exists a consistent SRM.

We showed that whenever a consistent SRM exists, there also exists a model satisfying $\varphi_{SRM}$, and whenever a model $m$ of the formula $\varphi_{SRM}$ exists, then there exists a consistent SRM. Consequently, LSRM-GF guarantees to generate a consistent SRM from the counterexamples $\counterexamples$.\qed
\end{proof}

\begin{corollary}[LSRM-GF convergence to optimal policy]
LSRM with given formulas converges to an optimal policy in the limit for finite state spaces with the same conditions as QSRM (see Theorem \ref{theorem_QSRM_convergence}) and LSRM-GF converges to an equivalent SRM.
\end{corollary}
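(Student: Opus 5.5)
The corollary follows by combining Theorem \ref{Theorem_LSRM_Given_Formulas_Convergence_Equivalent_SRM} with Theorem \ref{theorem_QSRM_convergence}. The key point is that LSRM-GF changes its hypothesis SRM only finitely often along almost every run. After the last change, LSRM-GF behaves exactly like QSRM run with a fixed SRM that is almost surely equivalent to the environment's reward SRM.

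\textbf{Step 1: finitely many hypothesis changes.} Each counterexample is added to $\counterexamples$, and every later hypothesis is consistent with it. Hence no trajectory ever causes a reset twice, since the same reward mismatch cannot recur on it. Because the state space is finite, the hypothesis space is also finite: the formulas $\Psi$ are given, and the inference returns an SRM with the fewest states consistent with $\counterexamples$. The minimal consistent state count is bounded by the number of states of the true SRM $C_2$. This holds because $C_2$, expressible in $\Psi$ by assumption, is consistent with every counterexample, and by the equivalence part of the proof of Theorem \ref{Theorem_LSRM_Given_Formulas_Convergence_Equivalent_SRM} it yields a model of $\varphi_{SRM}$. With finitely many states and formulas there are only finitely many transition structures. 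The outputs $o_{p,\varphi}$ are reals and so not a finite set a priori. However, they are pinned to observed rewards on used transitions, and unused ones can be treated as irrelevant.

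The plan is to argue that each reset strictly shrinks the set of candidate structures, counting candidates up to agreement on reachable behaviour. Therefore only finitely many resets occur, and by Theorem \ref{Theorem_LSRM_Given_Formulas_Convergence_Equivalent_SRM} the final hypothesis $C_1$ satisfies $C_1 \simeq C_2$.

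\textbf{Step 2: apply QSRM convergence on the tail.} Let $m$ be the (random, almost surely finite) episode of the last reset. From episode $m+1$ on, the Q-tables are fixed-structure tables for $C_1$, and the algorithm is exactly QSRM. The conditions of Theorem \ref{theorem_QSRM_convergence} are asymptotic, so they still hold on the tail. Infinite visitation of all state--action pairs holds on the tail, and so does the condition $\sum \alpha_t=\infty$, $\sum\alpha_t^2<\infty$. The only adjustment is to re-index the learning rates, or to note that QSRM convergence is insensitive to finite prefixes.

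Hence the Q-values converge to the optimal Q-function of the product MDP built from $M$ and $C_1$, as constructed in the proof of Theorem \ref{theorem_QSRM_convergence}. The policy optimal there is optimal for the original non-Markovian problem. This follows because $C_1$ and $C_2$ produce identical reward sequences on all trajectories of positive probability. So every policy has the same expected return under both reward functions, and the product MDPs for $C_1$ and $C_2$ have the same optimal policies, in terms of the reachable parts of the product.

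\textbf{Main obstacle.} The expected difficulty is Step 1, showing that resets stop after finitely many episodes rather than merely that the hypothesis is "eventually equivalent". The real-valued outputs and the minimality selection must be handled carefully to exclude infinite switching among equivalent-looking hypotheses.

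I would first try a potential argument. Each reset adds a constraint that eliminates the current hypothesis's (structure, relevant-output) pair. Relevant outputs take values only in the finite set of rewards observed along trajectories, because the state space is finite. The set of such pairs with at most $|Q_{C_2}|$ states is finite, so resets must terminate. Where finite state spaces enter in Step 2 is simply that Theorem \ref{theorem_QSRM_convergence} concerns tabular Q-learning.
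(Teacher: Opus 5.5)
Your argument is correct and follows the paper's proof, which only chains Theorem~\ref{Theorem_LSRM_Given_Formulas_Convergence_Equivalent_SRM} with Theorem~\ref{theorem_QSRM_convergence}; your Step~1 (only finitely many hypothesis and Q-table resets) and the return-equivalence argument in Step~2 make explicit what the paper leaves implicit. If you write Step~1 out, run the potential on pairs (structure $D$, set of $\Psi$-transitions of $D$ traversed by $\counterexamples$) rather than on output values, because outputs on reachable but untraversed transitions are arbitrary reals, not observed rewards: a mismatch on an already-traversed transition or via a post-processing self-loop makes $D$ permanently inconsistent (runs are forced by determinism), and otherwise the traversed set of $D$ strictly grows.
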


\begin{proof}
    We prove above that LSRM with given formulas converges to an almost surely equivalent SRM. The convergence of QSRM then guarantees that LSRM-GF also converges to an optimal policy in the limit.
\end{proof}

\subsubsection{Convergence of LSRM with Formula Templates}
The natural question is whether LSRM-FT has the same convergence properties as LSRM-GF. In the following, we argue that this is the case.
\begin{theorem}[LSRM-FT convergence to equivalent SRM]
LSRM with formula templates converges to an almost surely equivalent SRM in the limit if all state-sequences are observed infinitely often and the formula templates used are capable of expressing the formulas that are used as guards at the transitions of the underlying SRM in the environment.
\end{theorem}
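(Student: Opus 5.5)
We mirror the proof of Theorem~\ref{Theorem_LSRM_Given_Formulas_Convergence_Equivalent_SRM}. The argument splits into two parts.

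\textbf{Step 1: reduction to consistency.} The outer part carries over verbatim. Suppose the hypothesis SRM $C_1$ and the environment SRM $C_2$ are not almost surely equivalent. Then some trajectory $\tau$ with positive probability satisfies $C_1(\tau)\neq C_2(\tau)$. Because every state-sequence is observed infinitely often, $\tau$ occurs in some first epoch $m$. LSRM then adds $\tau$ to $\counterexamples$ and recomputes the hypothesis. From epoch $m+1$ onward, the inferred SRM agrees with $C_2$ on $\tau$, provided the inference step always returns an SRM consistent with $\counterexamples$. So everything reduces to showing that $\varphi_{SRM}$ (with the FT constraints) is satisfiable whenever a consistent SRM expressible by the templates exists, and that every model yields a consistent SRM.

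\textbf{Step 2: model to SRM.} This direction is the easy one. Determinism follows from the construction $\varphi^p_i=\psi^p_i\wedge\bigwedge_{j\neq i}\neg\psi^p_j$, which makes distinct guards of a state mutually exclusive, together with the rewritten $\varphi_{det}$, which forbids the same guard leading to two target states. Starting in $q_0$ and consistency follow exactly as in the GF proof. There $\varphi_{used}$ supplies an enabled transition, and $\varphi_{trans}$, with $\mathrm{substitute}$ in place of $\models$, forces both the successor state and the output $o_{p,\varphi}=r_{\counterexample,t+1}$. Completeness comes from the same post-processing self-loop guarded by $\neg\bigvee X^p$, which no counterexample uses.

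\textbf{Step 3: SRM to model, the main obstacle.} Here the GF argument does not transfer directly. The template guards are not the original guards but their ``exclusive'' versions, so we must show that the environment SRM $C_2$ can be realized in this shape. The hypothesis says each guard of $C_2$ is expressible by a template, i.e.\ by a possibly negated box. We assign the template variables $pos^p_i$ and $b^p_{i,x_j}$ so that $\psi^p_i$ equals the $i$-th outgoing guard of $p$ in $C_2$, padding unused templates with an empty box (for instance $b_{x_\geq}=b_{x_<}$ and $pos=\mathrm{true}$).

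Since $C_2$ is deterministic, distinct guards $\psi^p_i,\psi^p_j$ of $p$ that lead to different targets are disjoint. Hence $\psi^p_i\wedge\bigwedge_{j\neq i}\neg\psi^p_j$ coincides with $\psi^p_i$ on every input, and the empty padding boxes change nothing. One subtlety arises when two guards of $C_2$ overlap but lead to the same target. Then the exclusive construction removes the overlap region from both guards. We handle this by first normalizing $C_2$, merging same-target guards, or else by noting that the overlap is covered only if the merged formula is again expressible. I expect to need to state this as part of the ``capable of expressing'' assumption; this is the point I would scrutinize most.

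With the guards matched, set $d_{p,\varphi^p_i,q}$ according to $\delta$ of $C_2$. Set $x_{\counterexample,t,p}$ along the runs of $C_2$ and $o_{p,\varphi^p_i}=\sigma(p,\varphi)$, and set the remaining $o$ variables to $0$. The verification of $\varphi_{det},\varphi_{state},\varphi_{trans},\varphi_{used}$ is then identical to the GF proof, with $\mathrm{substitute}(\varphi,s_{\counterexample,t+1})$ evaluating exactly as $s_{\counterexample,t+1}\models\varphi$. One more requirement concerns $f$: the per-state number of templates must be at least the maximal out-degree of $C_2$, so I would either assume that explicitly or rely on the first option of increasing $f$ alongside $|Q|$. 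Finally, the iterative search over $|Q|$ terminates at the latest at $|Q|=|Q_{C_2}|$, which completes the argument.
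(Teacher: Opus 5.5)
Your proposal follows the paper's proof. The outer counterexample argument is taken over from the LSRM-GF theorem, and the satisfiability equivalence is redone with $\Psi^p$ in place of $\Psi$; determinism of distinct guards comes from the exclusive construction, so $\varphi_{det}$ only has to forbid one guard leading to two targets, and your Step~3 (matching each $\psi^p_i$ to an environment guard, padding with empty boxes, $f$ at least the out-degree) spells out what the paper leaves implicit in ``capable of expressing''. The same-target overlap you flag does not arise under the determinism the paper actually uses in the LSRM-GF proof and in the GF version of $\varphi_{det}$, where any two distinct outgoing transitions of a state have jointly unsatisfiable guards (which is also what makes the output along a run well defined), so no extra assumption is needed there.
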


\begin{proof}
The proof is adapted from Theorem \ref{Theorem_LSRM_Given_Formulas_Convergence_Equivalent_SRM}. Only the SRM-inferring process is different. However, both SRM-inferring processes generate a new SRM that is consistent with the counterexamples. The proof that LSRM-FT generates a consistent SRM from the counterexamples $\counterexamples$ is analogous to the proof for LSRM-GF. The difference is that for LSRM-FT the determinism of different guards is given through the construction of them and it is only necessary to check that a guard is not used for two outgoing transitions in one SRM state. Furthermore, each $\Psi$ has to be replaced by an $\Psi^p$. As a result, LSRM with formula templates converges to an almost surely equivalent SRM.
\end{proof}

\begin{corollary}[LSRM-FT convergence to optimal policy]
LSRM with formula templates converges to an optimal policy in the limit for finite state spaces with the same conditions as QSRM (see Theorem \ref{theorem_QSRM_convergence}) and LSRM-FT converges to an equivalent SRM.
\end{corollary}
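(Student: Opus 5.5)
The plan mirrors the proof of the LSRM-GF corollary: combine the LSRM-FT convergence-to-equivalent-SRM theorem with Theorem~\ref{theorem_QSRM_convergence}.

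\textbf{Step 1: the hypothesis SRM stabilizes.} By the preceding theorem, LSRM-FT converges to an SRM $C_1$ with $C_1 \simeq C_2$, where $C_2$ is the SRM underlying the environment. I want this to mean more than convergence: the hypothesis is \emph{eventually constant}. Argue as follows.
\begin{itemize}
\item Each hypothesis change is triggered by a counterexample, and the inferring process always returns a consistent SRM with the fewest states, for the fixed number $f$ of formula templates per state.
\item Since $C_2$ is expressible by the templates, the state count of any hypothesis is bounded by $|Q_{C_2}|$.
\item In a finite state space with bounded reward values along observed trajectories, the relevant distinct behaviours on finite trajectory prefixes are finite.
\item Moreover, once $C_1 \simeq C_2$, no further counterexample can arise with positive probability.
\end{itemize}
So after some almost surely finite episode $m^*$, the hypothesis SRM and its Q-tables are never reset again.

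\textbf{Step 2: QSRM from $m^*$ onward.} From episode $m^*$ on, LSRM-FT is exactly QSRM run with the fixed SRM $C_1$. Because $C_1 \simeq C_2$, the rewards $C_1$ produces coincide almost surely with the environment rewards. Hence the cross-product MDP built from $M$ and $C_1$ in the proof of Theorem~\ref{theorem_QSRM_convergence} has the same optimal values as the one built from $C_2$, up to null sets of trajectories. That construction requires $S \times Q$ to be finite for tabular Q-learning, which is why the statement restricts to finite state spaces.

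\textbf{Step 3: apply QSRM convergence.} The assumed QSRM conditions are bounded rewards, infinitely many visits to every (state, action)-pair, and the Robbins--Monro learning rates. All of these hold on the tail of episodes after $m^*$; an almost surely finite prefix does not affect $\sum_t \alpha_t = \infty$ or $\sum_t \alpha_t^2 < \infty$. Theorem~\ref{theorem_QSRM_convergence} then yields convergence to an optimal policy for the product MDP. Because the two SRMs agree almost surely, this policy is optimal for the original non-Markovian task as well.

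\textbf{Main obstacle.} The crux is Step 1, showing that the hypothesis actually stops changing rather than only converging in the limit, since every change reinitializes the Q-tables. My first approach is to use minimality of the inferred SRM, bounded by $|Q_{C_2}|$, together with the fact that every counterexample has positive probability under the current hypothesis. If that does not suffice, I would fall back on the phrasing used in the LSRM-GF corollary: assume convergence to an equivalent SRM, as the statement does, and read it as eventual stabilization.
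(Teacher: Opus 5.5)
Your overall route is the paper's. Its proof of this corollary is two sentences: it cites the LSRM-FT equivalence theorem and then Theorem~\ref{theorem_QSRM_convergence}. It implicitly reads ``converges to an almost surely equivalent SRM'' as ``the hypothesis SRM is eventually fixed'', which is exactly your fallback. Your Steps~2 and~3 spell out what the paper leaves implicit: finiteness of $S \times Q$, discarding a finite prefix of the learning-rate series, and agreement of the two product MDPs on positive-probability trajectories. At the paper's level of rigor they are fine.

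Your Step~1 goes beyond the paper. You are right that stabilization is genuinely needed, because Algorithm~\ref{alg:LSRM} reinitializes the Q-tables at every hypothesis change. As written, however, its third bullet is not justified. The outputs $o_{p,\varphi}$ are unconstrained real variables. So even with at most $|Q_{C_2}|$ states and a finite $S$, there are infinitely many candidate SRMs with pairwise different behaviour. Bounded \emph{observed} rewards say nothing about outputs on transitions that no counterexample has exercised.

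A correct count goes through \emph{structures} instead, meaning states, guard sets and transition targets. Over a finite $S$ every guard is just a subset of $S$. With at most $|Q_{C_2}|$ states and at most $f+1$ guards per state (including the completion guard), there are therefore finitely many structures. The structure alone determines the run, and hence which pair $(p,\varphi)$ is used at each step of any trajectory. Now suppose a hypothesis with structure $\Sigma$ is refuted on a counterexample. Either the offending pair was already forced by an earlier counterexample to a value that now disagrees with the observed reward, so no later hypothesis can have structure $\Sigma$. Or the pair was not yet forced, and every later hypothesis with structure $\Sigma$ must now give it the observed reward. Hence each structure occurs only finitely often, and the number of hypothesis changes is finite (surely, not just almost surely). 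Your exploration assumption then forces the final hypothesis to be almost surely equivalent to $C_2$.
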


\begin{proof}
    We prove above that LSRM with formula templates converges to an almost surely equivalent SRM. The convergence of QSRM then guarantees that LSRM-FT also converges to an optimal policy in the limit.
\end{proof}

\section{Experimental Results}
To demonstrate the effectiveness of our methods, we implemented a prototype in Python using PyTorch \cite{NEURIPS2019_bdbca288} for training the neural networks. We evaluate the methods in discrete (finite) and continuous (infinite) environments. The SMT solver used is Z3 \cite{Moura.2008}, and the experiments were carried out on an Intel Core i9-13900K and an NVIDIA RTX A6000.

\subsubsection{Environments and Tasks} 
Here, we describe the environments and tasks that we use to evaluate the methods. We evaluate the methods in two types of environments.

The first environment type is the discrete and continuous Office World. We perform two tasks on both variants of the Office World environment. The first task is the example task that we introduced in Section \ref{Preliminaries_example_task}. The RM and SRM for the task are visualized in Figure \ref{fig:reward_machine_task_post_inner_offices}. We refer to the task as `post\_inner\_offices'. The second task is named `diagonal\_run'. The goal of the task is that the robot navigates from the initial state to the upper right corner (label C), then to the upper left corner (label D), and finally to the bottom right corner (label B). The RM and SRM for this task are visualized in Figure \ref{fig:task_diagonal_run_machines}.
\begin{figure}
    \centering
    \begin{subfigure}[b]{0.3\linewidth}
        \includegraphics[width=\linewidth]{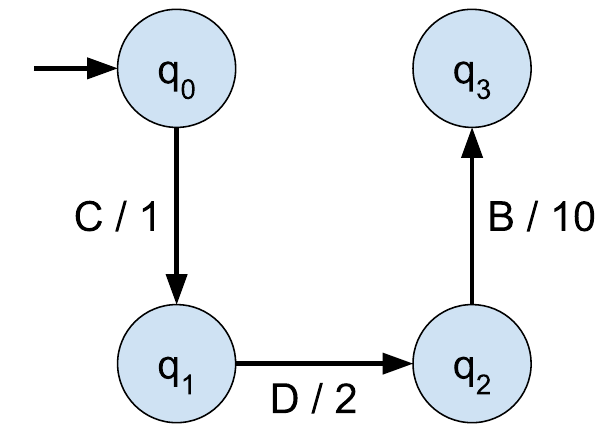}
        \caption{RM}
    \end{subfigure}
    \hspace{0.07\linewidth}
    \begin{subfigure}[b]{0.45\linewidth}
         \includegraphics[width=\linewidth]{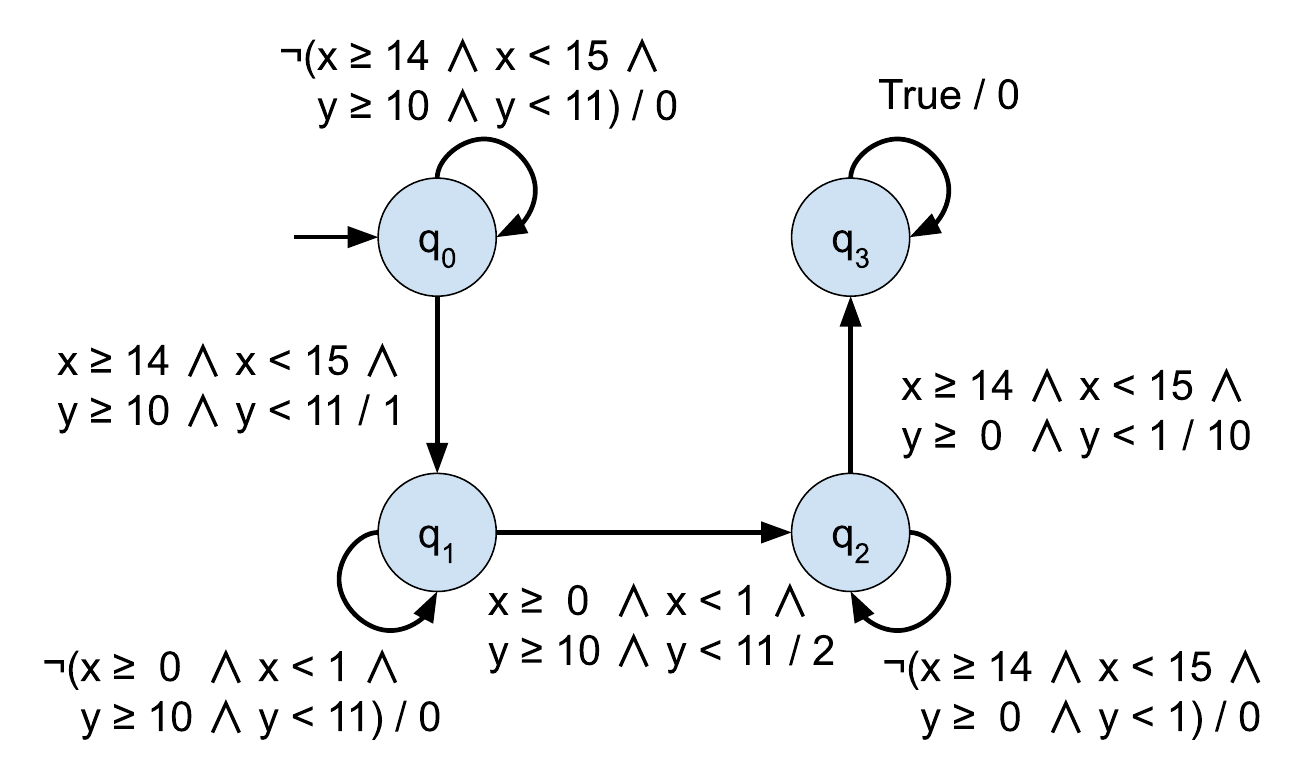}
         \caption{SRM}
    \end{subfigure}
    \caption{RM and SRM of task diagonal\_run for the Office World. Self-loops in the RM with zero rewards are omitted.}
    \label{fig:task_diagonal_run_machines}
\end{figure}
    
We also evaluate our methods that are developed for infinite MDPs in a second environment. We use a modified version of the Mountain Car environment, which is a widely-used environment in the RL community and implemented in the popular Gymnasium library \cite{towers2025gymnasium}. The first version of the Mountain Car environment was defined by Moore \cite{Moore.1990}. We take the implementation of the Mountain Car environment of the Gymnasium library\footnote{\url{https://gymnasium.farama.org/environments/classic_control/mountain_car_continuous/}} and modify it. Figure \ref{fig:mountain_car_environments} illustrates the difference between the original variant and our variant of the environment.
We shift the x-axis to have a left and a right mountain peak, as opposed to a single incline to the right in the original. The selected action is used for four timesteps in our version.
We denote the task for the Mountain Car environment as `rml'. The name `rml' refers to `right-middle-left'. The task is defined as follows. First, the car has to be navigated onto the right mountain. Afterwards, it has to be navigated back to the start region between the mountains. Thereby, it is required that the velocity is back to near zero. Finally, the car has to be navigated onto the left mountain. We use this environment exclusively to evaluate our novel methods for infinite MDPs. Hence, we only define an SRM version of this task that is visualized in Figure \ref{fig:SRM_Task_rml}.

\begin{figure}
    \centering
    \begin{subfigure}[b]{0.45\linewidth}
        \includegraphics[height=3.5cm,frame]{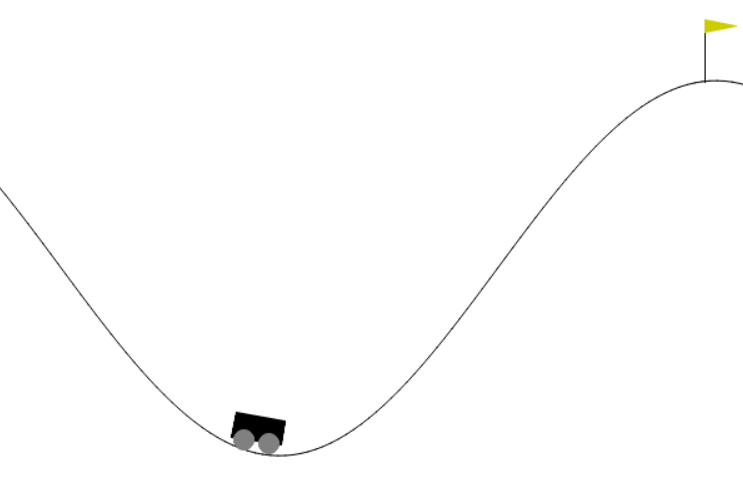}
    \end{subfigure}
    \hspace{0.08\linewidth}
    \begin{subfigure}[b]{0.45\linewidth}
         \includegraphics[height=3.5cm,frame]{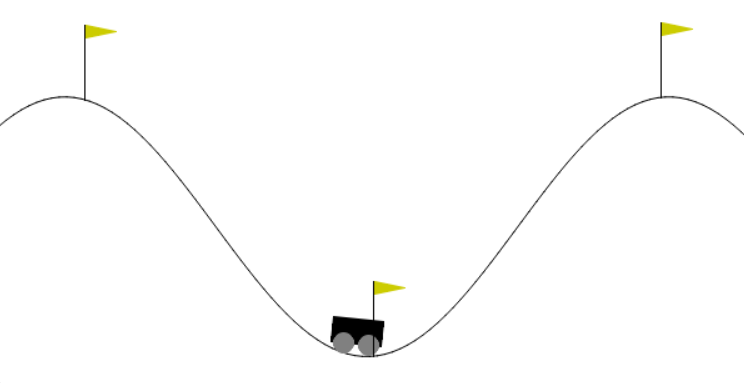}
    \end{subfigure}
    \caption{Original Mountain Car environment (left) and our version (right).}
    \label{fig:mountain_car_environments}
\end{figure}

\begin{figure}
    \centering
    \includegraphics[width=0.6\linewidth]{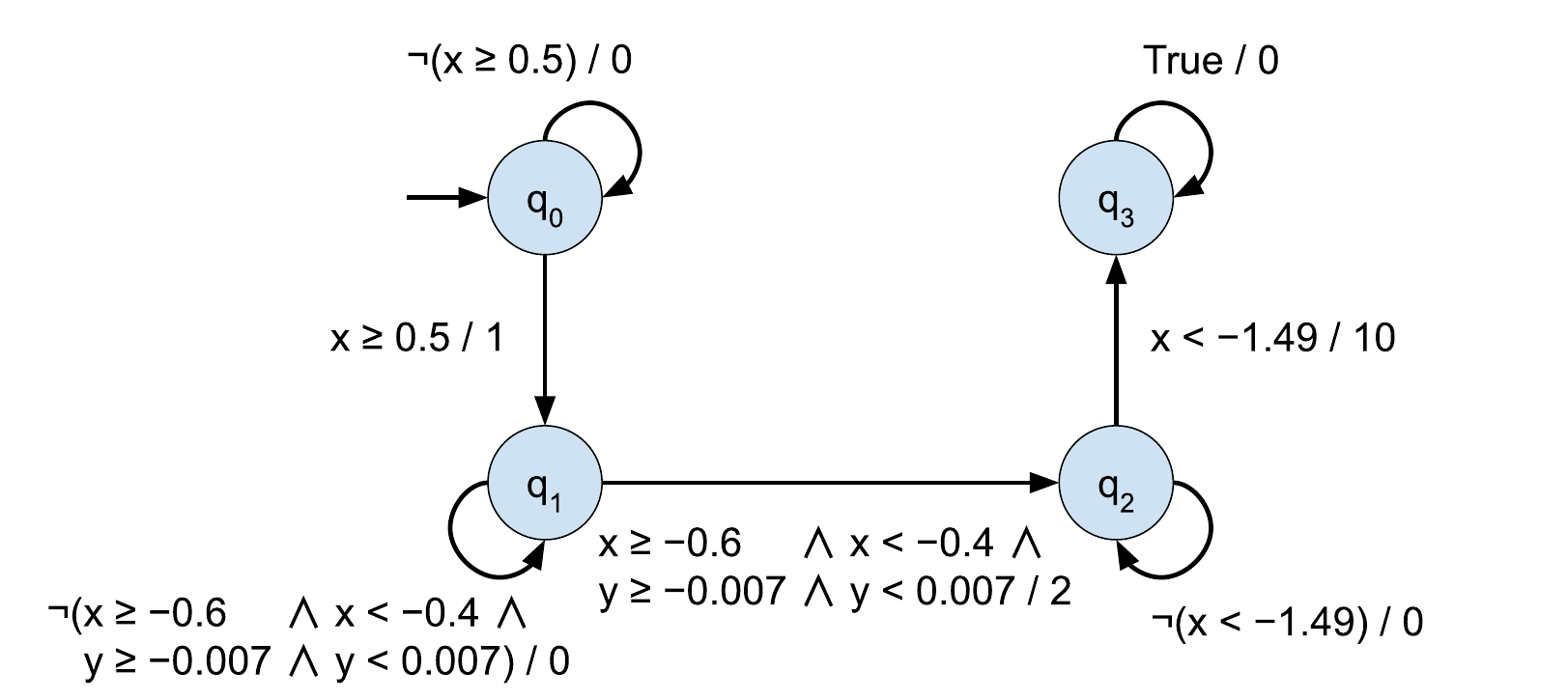}
    \caption{SRM for task `rml' for our Mountain Car environment.}
    \label{fig:SRM_Task_rml}
\end{figure}

\subsubsection{Metric}
To evaluate the methods, we compute a metric that we call `mean10 performance value'. Every 5,000 steps (for Q-Learning, QRM, QSRM, LSRM methods for finite environments) or every 10,000 steps (for DQN \cite{mnih2015humanlevel} (with a frame stack of 100 states), DQRM, DQSRM, LSRM methods for infinite environments), we pause the learning process and execute the current policy as a greedy policy 20 times in a newly initialized environment, where each run is limited to a maximum of 500 steps. The mean reward obtained across these 20 runs is denoted as the `performance value'. The `mean10 performance value' is defined as the mean of the last 10 performance values, normalized to the range $[0,1]$ with respect to the maximal achievable reward of the task. We use this metric for the following reasons:
\begin{enumerate}
    \item It provides an intuitive view of the progress during training since we compute it periodically.
    \item It directly refers to the performance and effectiveness of the returned policy because it is not affected by the exploration parameter $\epsilon$.
    \item Averaging 20 runs and using a sliding window smooths the metric value and detaches the value from outliers or stuck runs.
    \item Scaling between 0 and 1 improves interpretability by bounding the metric and making relative performance comparisons more intuitive.
\end{enumerate}

\subsubsection{Research Questions}
We summarize the research questions for the evaluation in the following:
\begin{enumerate}[label=RQ\arabic*:,ref=RQ\arabic*]
    \item How do our methods perform in comparison to the baseline methods Q-Learning and DQN (with a frame stack of 100 states for DQN)?\label{RQ1}
    \item How does (D)QSRM perform in comparison to (D)QRM? We expect that the methods for SRMs perform exactly the same as the methods for RMs. The reason for that is that QSRM can be seen as a more intuitive QRM method in the case where the agent has access to the RM and SRM. The reason for that is if we substitute each formula with a label and add a labeling function into the environment that exactly maps the values regarding to the formulas, we get the setup of QRM. Accordingly, QRM and QSRM have to generate exactly the same results if we use the same random seeds.\label{RQ2}
    \item LSRM learns policies end-to-end. The obvious question is whether our LSRM methods learn an optimal policy.\label{RQ3}
    \item LSRM learns the SRM in the learning process, too. Here, the main question is: Does LSRM learn an almost surely equivalent SRM? If this is not the case, the follow up question is: Is the learned SRM close enough to the one used to get good performance values?\label{RQ4}
\end{enumerate}

\subsection{Results of methods with access to RM/SRM}
In this section, we focus on the methods that have direct access to the RM or SRM. Thus, we look at (D)QRM, (D)QSRM, and for comparison at the baseline methods Q-Learning and DQN (with a frame stack of 100 states for DQN). These methods relate to \ref{RQ1} and \ref{RQ2}.
Figure \ref{fig:mean10_performance_values_methods_with_access_to_rm_srm} shows the results for the different methods. The figure provides the required information and results to evaluate and compare our methods with the baseline and existing ones.

The baseline methods, Q-Learning and DQN (with a frame stack of 100 states), perform worse. Q-Learning has no information about the state history. Thus, it cannot learn non-Markovian reward functions, which results in low mean10 performance values. Interestingly, DQN with a frame stack also performs worse in the example tasks, even though it has access to the last 100 states, which should be sufficient to cover all required steps for the tasks in the frame stack. If we increase the frame stack further, it becomes increasingly challenging to train the network. In addition, we never know how large the frame stack needs to be. As a result, both baseline methods perform worse on the example tasks. The methods based on RMs and SRMs train effective policies and obtain good mean10 performance values. As a result of this comparison, we can say that our new methods (D)QSRM and the existing methods (D)QRM outperform the baseline methods. Furthermore, QRM and QSRM converge to the optimal mean10 performance value 1. This observation also matches with the convergence properties of QRM \cite{Icarte.2018} and QSRM (Theorem \ref{theorem_QSRM_convergence}).
The second observation for QRM and QSRM is that they perform exactly the same. This is a result that we expected in \ref{RQ2}. Furthermore, it demonstrates that our new QSRM method can also train effective policies in the non-Markovian setting.

The result regarding performance also applies to the DQRM and DQSRM methods for infinite state spaces. As expected, both methods yield identical mean10 performance values when initialized with the same random seed. We note that neither DQRM nor DQSRM reach optimal mean10 performance values. Thus, their policies are not optimal. However, we expected this behaviour because both methods rely on DQN. Nevertheless, they still achieve good performance values.

Overall, it can be seen that our new methods (D)QSRM provide the ability to train effective policies and obtain good performance values and at the same time have fewer restrictions than (D)QRM as they respect the standard MDP definition and do not need a labeling function. In addition, like (D)QRM, they outperform the baseline methods.

\begin{figure}
    \centering
    \begin{subfigure}[b]{\linewidth}
        \includegraphics[width=\linewidth]{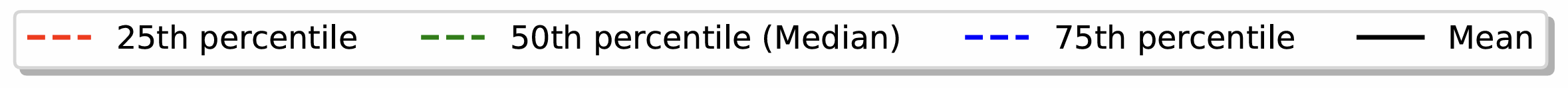}
    \end{subfigure}\\
    \begin{subfigure}[b]{0.32\linewidth}
    \caption*{Q-Learning}
        \includegraphics[width=\linewidth]{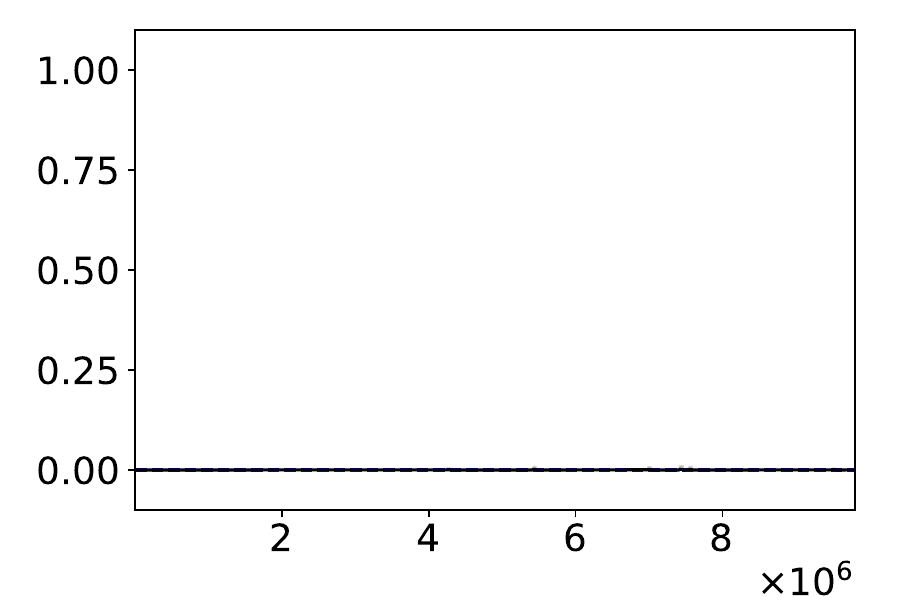}
    \end{subfigure}
    \hfill
    \begin{subfigure}[b]{0.32\linewidth}
        \caption*{QRM}
        \includegraphics[width=\linewidth]{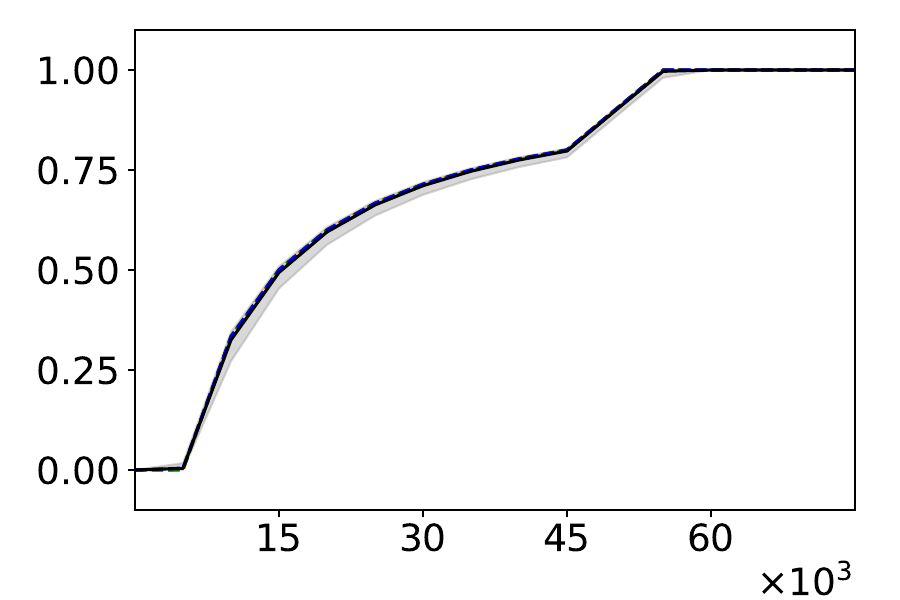}
    \end{subfigure}
    \hfill
    \begin{subfigure}[b]{0.32\linewidth}
        \caption*{QSRM}
        \includegraphics[width=\linewidth]{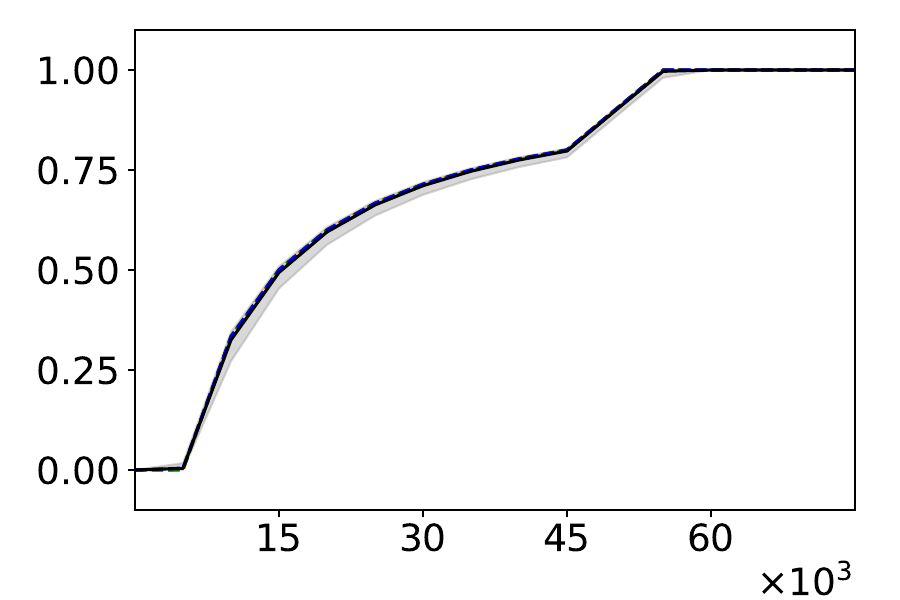}
    \end{subfigure}\\
    \begin{subfigure}[b]{0.32\linewidth}
        \includegraphics[width=\linewidth]{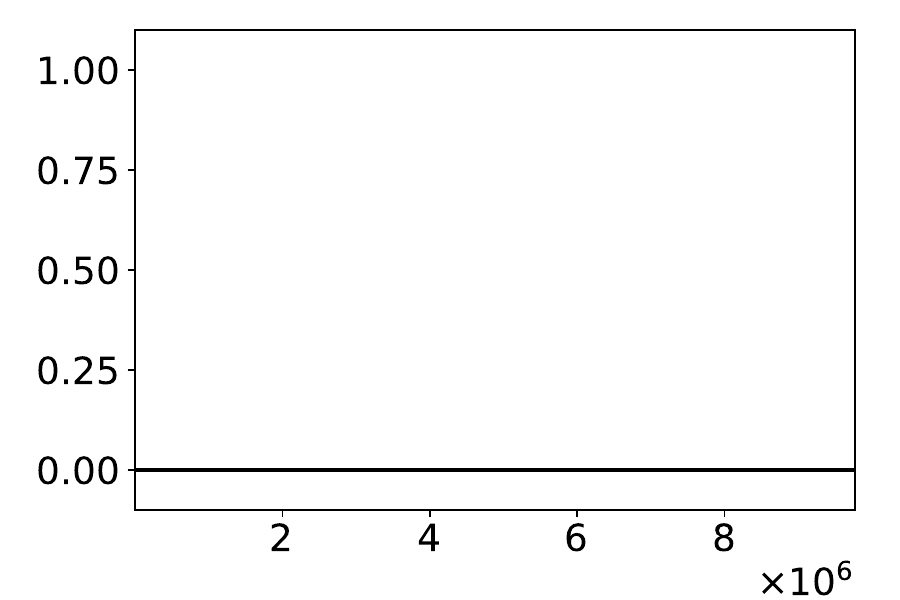}
    \end{subfigure}
    \hfill
    \begin{subfigure}[b]{0.32\linewidth}
        \includegraphics[width=\linewidth]{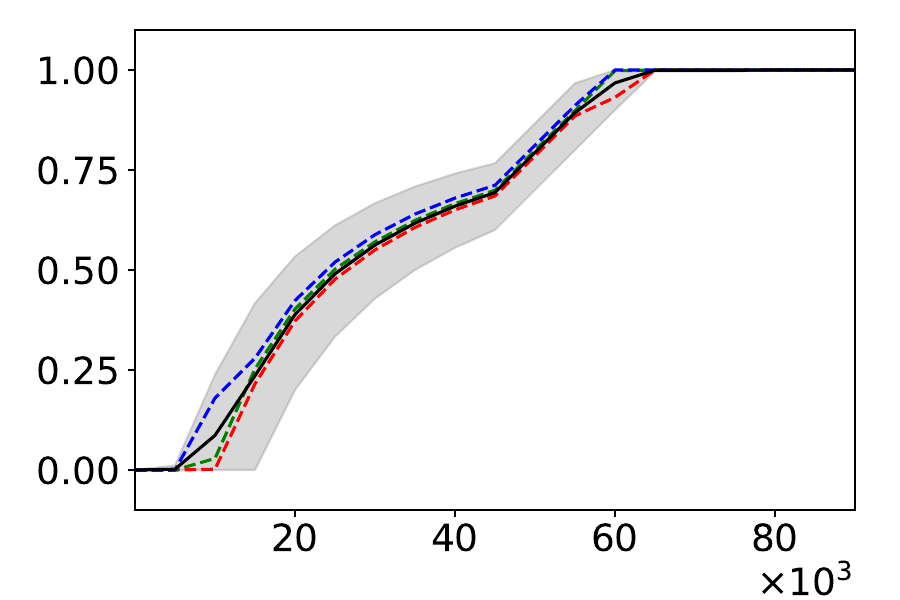}
    \end{subfigure}
    \hfill
    \begin{subfigure}[b]{0.32\linewidth}
        \includegraphics[width=\linewidth]{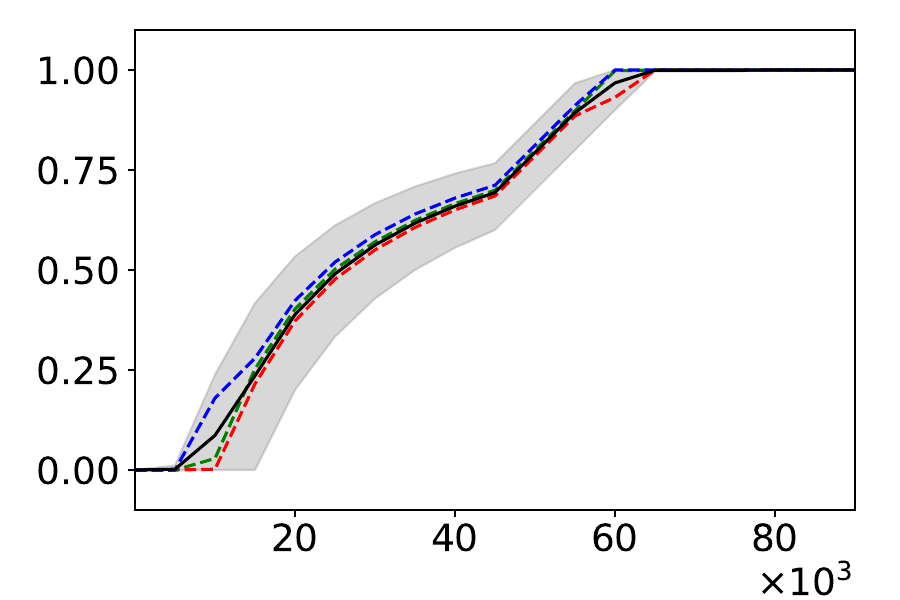}
    \end{subfigure}\\[0.5em]
    \hrule
    \begin{subfigure}[b]{0.32\linewidth}
        \caption*{DQN}
        \includegraphics[width=\linewidth]{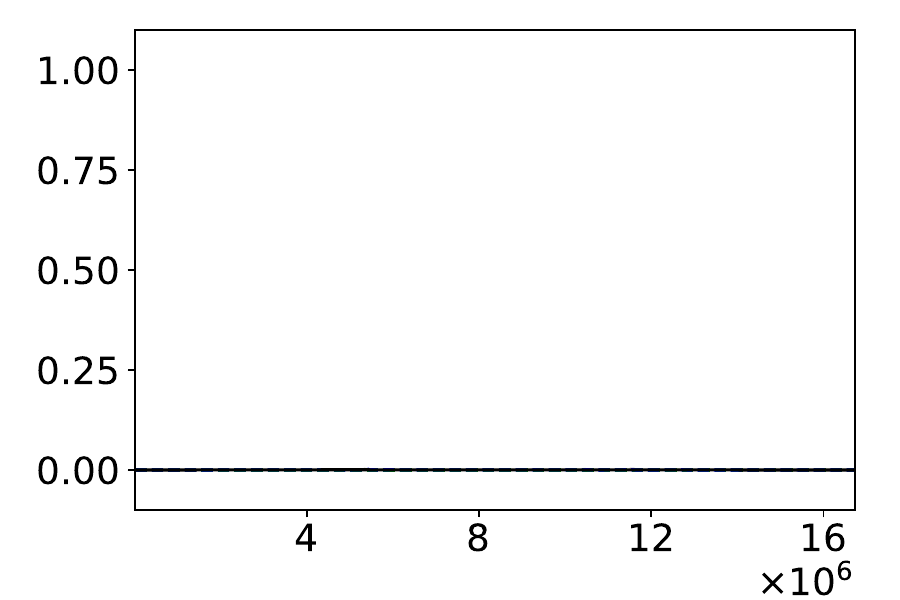}
    \end{subfigure}
    \hfill
    \begin{subfigure}[b]{0.32\linewidth}
        \caption*{DQRM}
        \includegraphics[width=\linewidth]{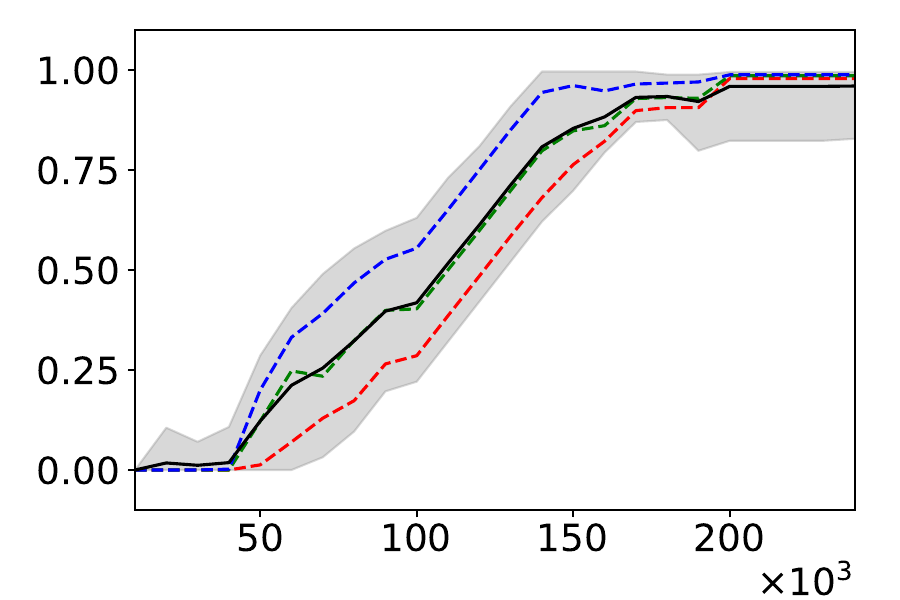}
    \end{subfigure}
    \hfill
    \begin{subfigure}[b]{0.32\linewidth}
        \caption*{DQSRM}
        \includegraphics[width=\linewidth]{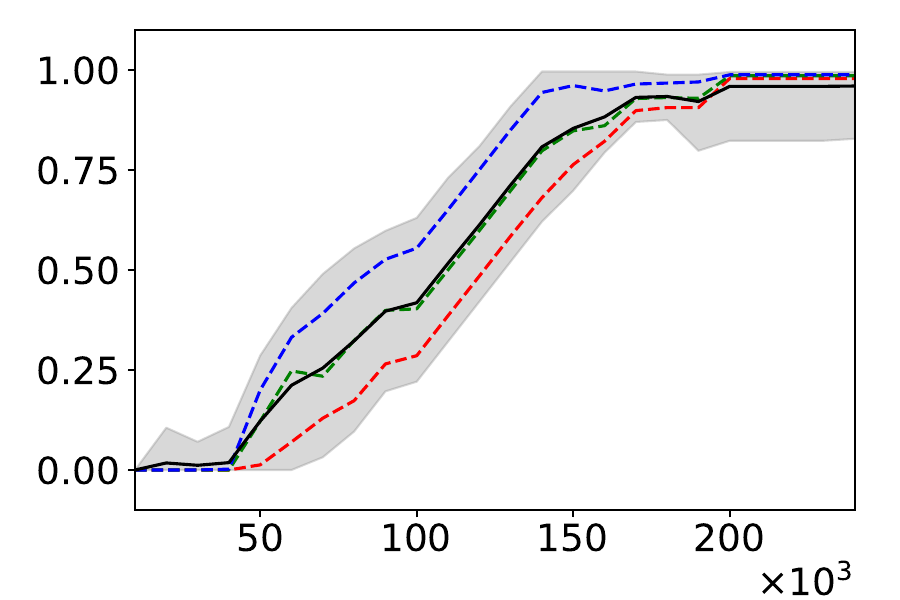}
    \end{subfigure}\\
    \begin{subfigure}[b]{0.32\linewidth}
        \includegraphics[width=\linewidth]{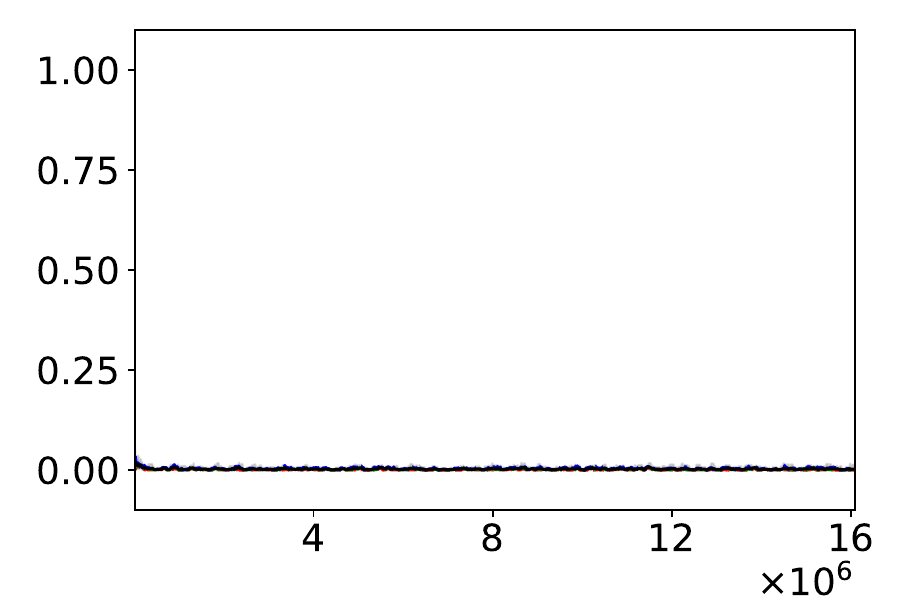}
    \end{subfigure}
    \hfill
    \begin{subfigure}[b]{0.32\linewidth}
        \includegraphics[width=\linewidth]{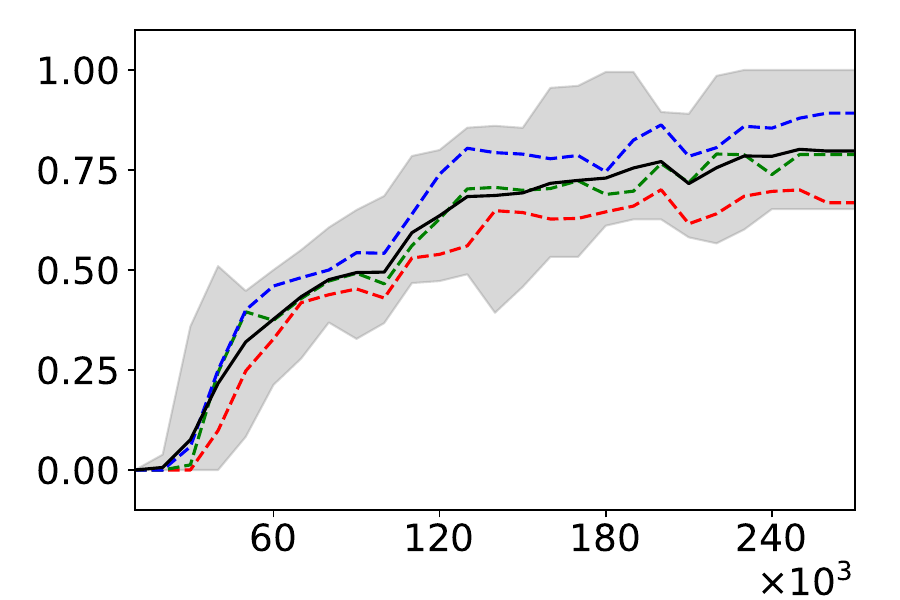}
    \end{subfigure}
    \hfill
    \begin{subfigure}[b]{0.32\linewidth}
        \includegraphics[width=\linewidth]{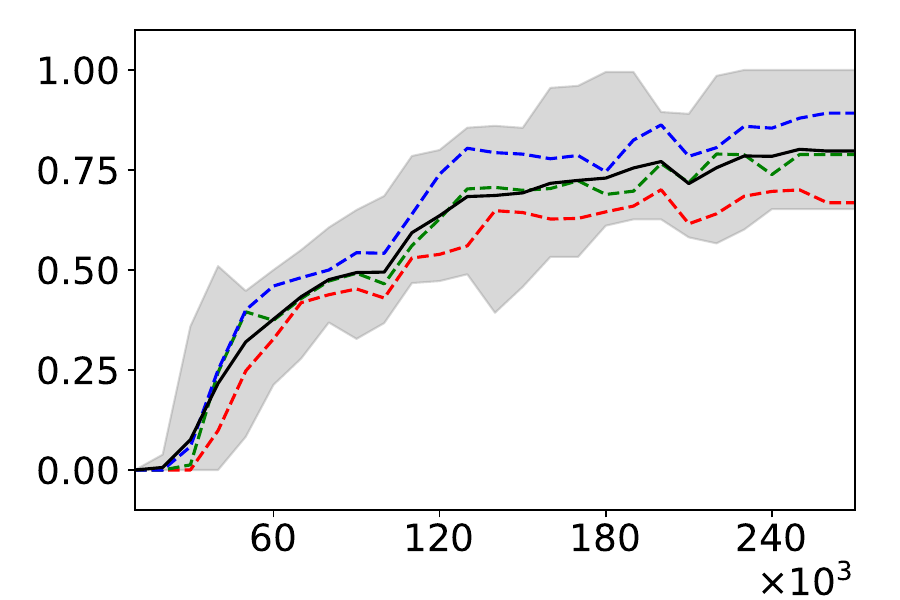}
    \end{subfigure}\\
   \begin{subfigure}[b]{0.32\linewidth}
        \includegraphics[width=\linewidth]{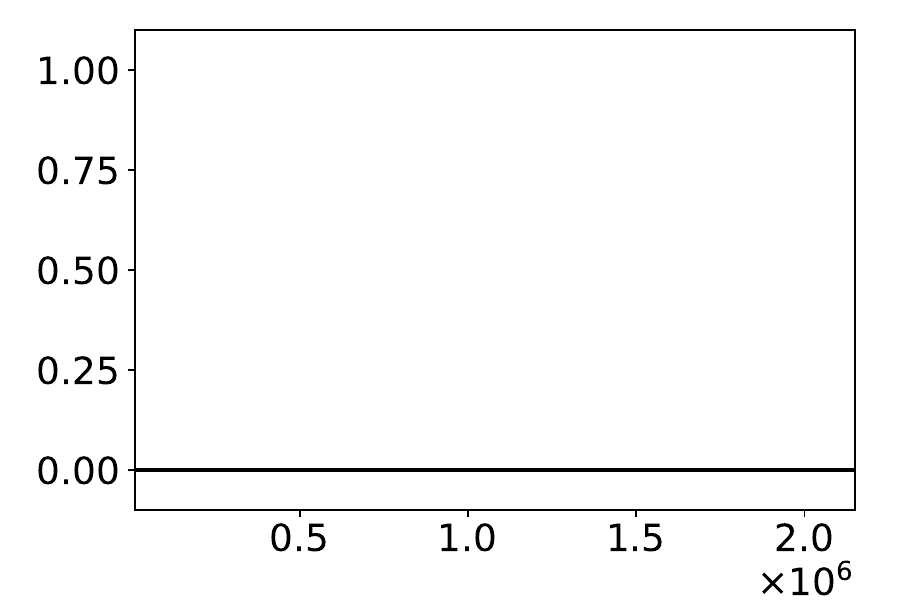}
    \end{subfigure}
    \hfill
    \begin{subfigure}[b]{0.32\linewidth}
        \caption*{}
    \end{subfigure}
    \hfill
    \begin{subfigure}[b]{0.32\linewidth}
        \includegraphics[width=\linewidth]{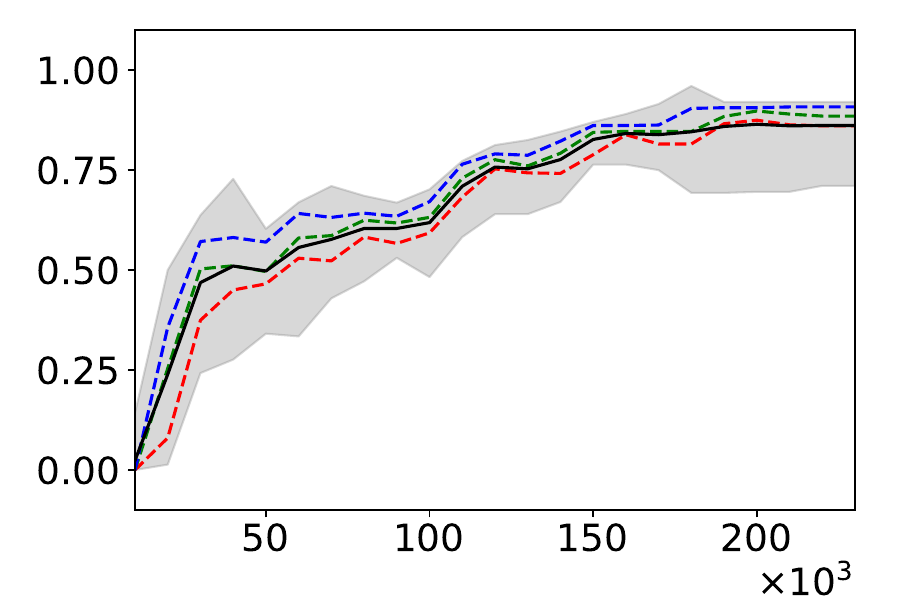}
    \end{subfigure}

    \caption{Mean10 performance values for the tasks post\_inner\_offices and diagonal\_run in the discrete Office World (Rows 1-2), and for the tasks post\_inner\_offices, diagonal\_run, and rml in the infinite environments (Rows 3-5).}
    \label{fig:mean10_performance_values_methods_with_access_to_rm_srm}
\end{figure}

\subsection{Results of LSRM}
In this section, we evaluate our novel LSRM methods that can infer the SRM during the learning process, and thus, provide the ability to learn policies for MDPs with non-Markovian reward functions end-to-end. In particular, we consider \ref{RQ3} and \ref{RQ4}: whether the LSRM methods learn an optimal policy, and whether the inferred SRM is almost surely equivalent to the one used in the environment. If they are not almost surely equivalent, we check whether the learned SRM is close enough to the one used to obtain good performance values.

Figure \ref{fig:mean10_performance_values_LSRM_methods} shows the mean10 performance values for all of our LSRM methods. The result plots demonstrate the performance and effectiveness of them. We can observe similar results as for the methods with access to the SRM. LSRM converges in both variants, LSRM-GF and LSRM-FT, for the tasks in the finite environment to the optimal mean10 performance values. Furthermore, LSRM reaches good but not optimal performance values in the infinite environments. Regarding \ref{RQ3}, we can claim that LSRM learns an optimal policy in the finite setting but not in the infinite setting. Nevertheless, LSRM learns a very good policy in the infinite setting. These results also match the convergence properties of LSRM from Section \ref{LSRM_Convergence}.

\begin{figure}
    \centering
    \begin{subfigure}[b]{\linewidth}
        \includegraphics[width=\linewidth]{graphics/results/legend.png}
    \end{subfigure}\\
    \begin{subfigure}[b]{0.32\linewidth}
    \caption*{post\_inner\_offices}
        \includegraphics[width=\linewidth]{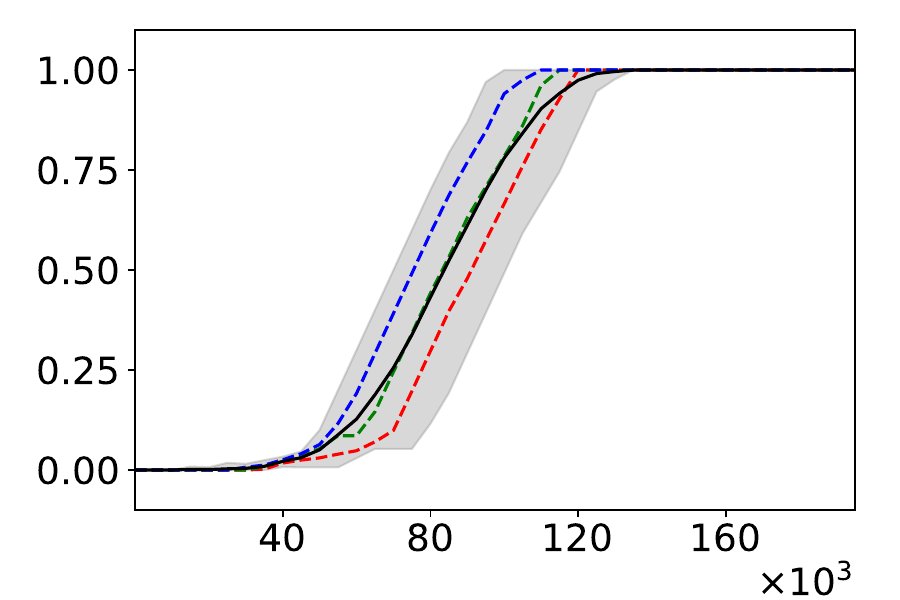}
    \end{subfigure}
    \hfill
    \begin{subfigure}[b]{0.32\linewidth}
        \caption*{diagonal\_run}
        \includegraphics[width=\linewidth]{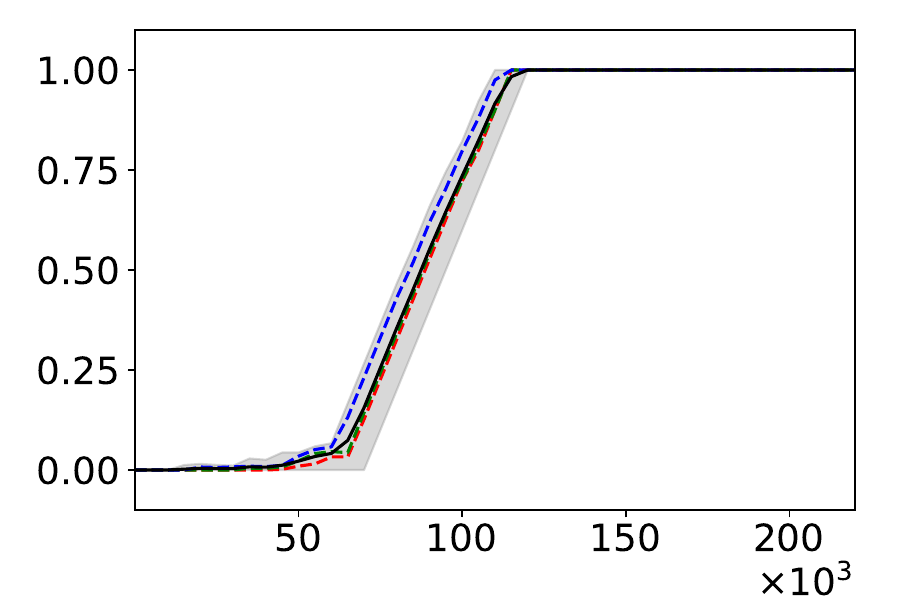}
    \end{subfigure}
    \hfill
    \begin{subfigure}[b]{0.32\linewidth}
        \caption*{}
    \end{subfigure}\\
    \begin{subfigure}[b]{0.32\linewidth}
        \includegraphics[width=\linewidth]{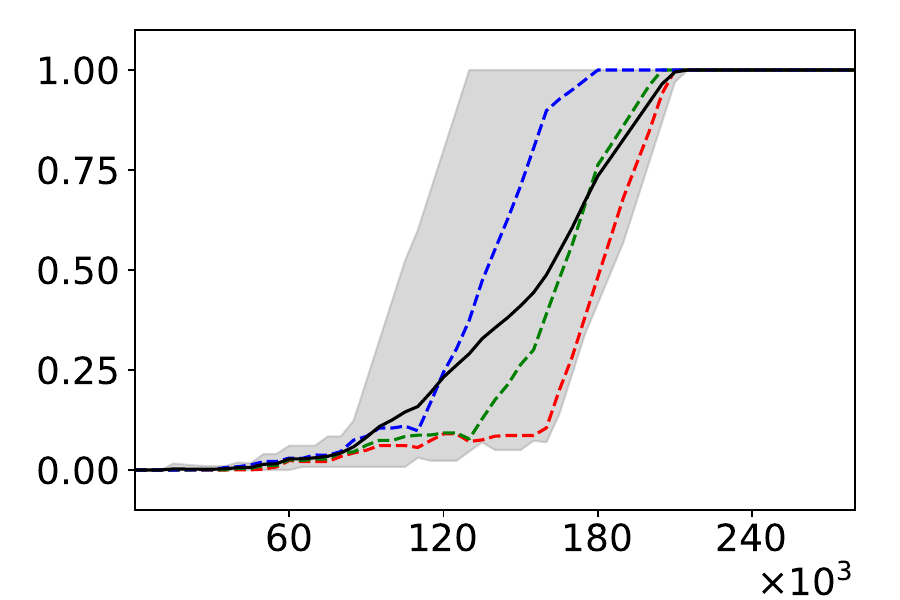}
    \end{subfigure}
    \hfill
    \begin{subfigure}[b]{0.32\linewidth}
        \includegraphics[width=\linewidth]{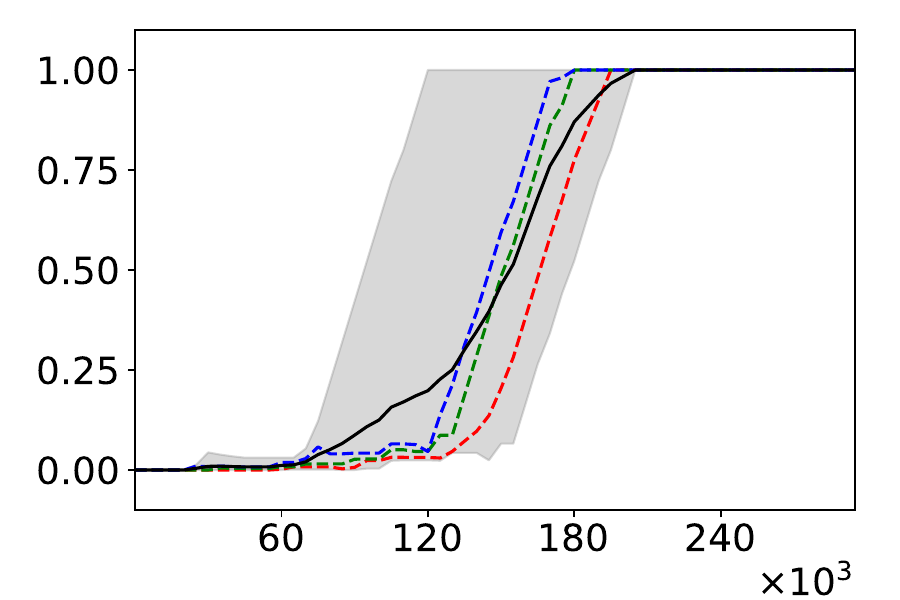}
    \end{subfigure}
     \hfill
    \begin{subfigure}[b]{0.32\linewidth}
        \caption*{}
    \end{subfigure}\\[0.5em]
    \hrule
    \begin{subfigure}[b]{0.32\linewidth}
        \caption*{post\_inner\_offices}
        \includegraphics[width=\linewidth]{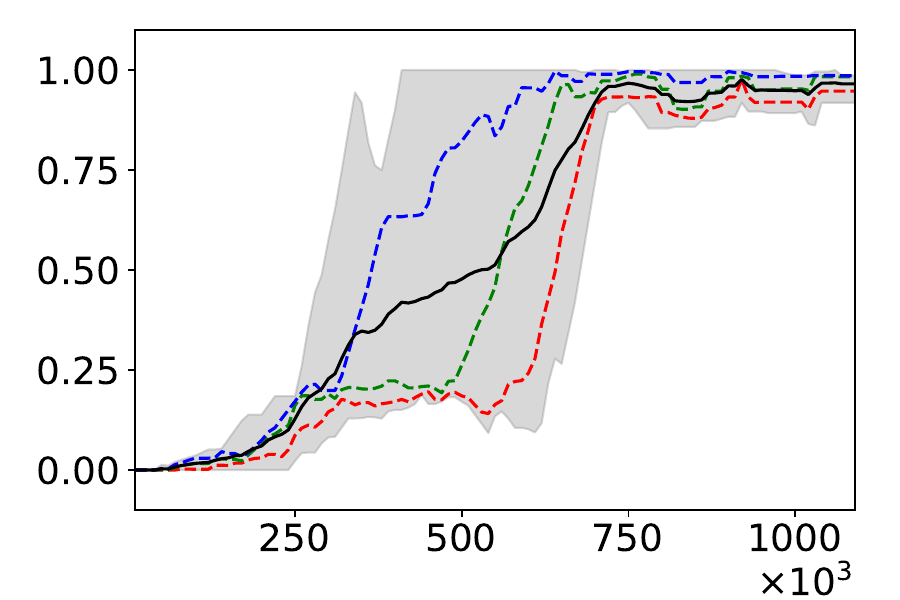}
    \end{subfigure}
    \hfill
    \begin{subfigure}[b]{0.32\linewidth}
        \caption*{diagonal\_run}
        \includegraphics[width=\linewidth]{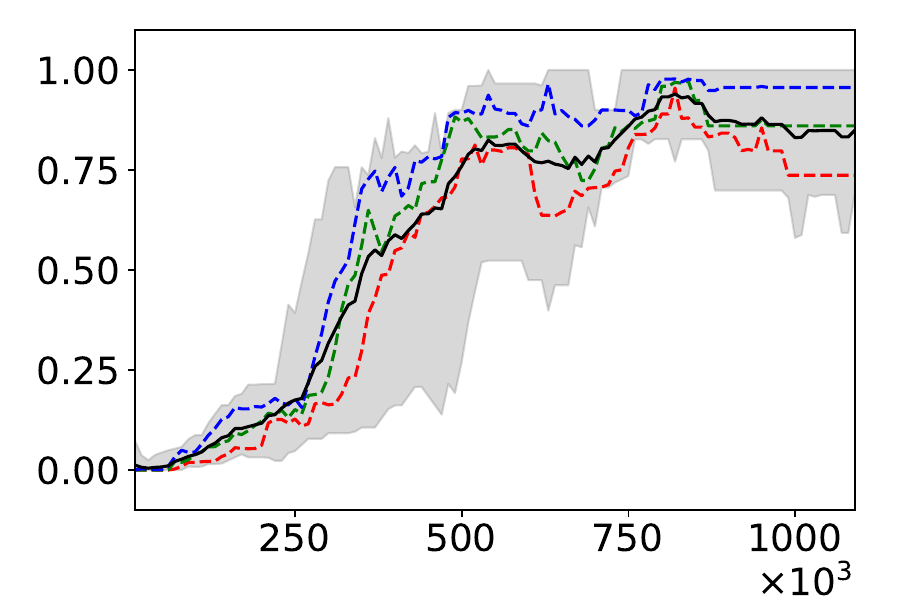}
    \end{subfigure}
    \hfill
    \begin{subfigure}[b]{0.32\linewidth}
        \caption*{rml}
        \includegraphics[width=\linewidth]{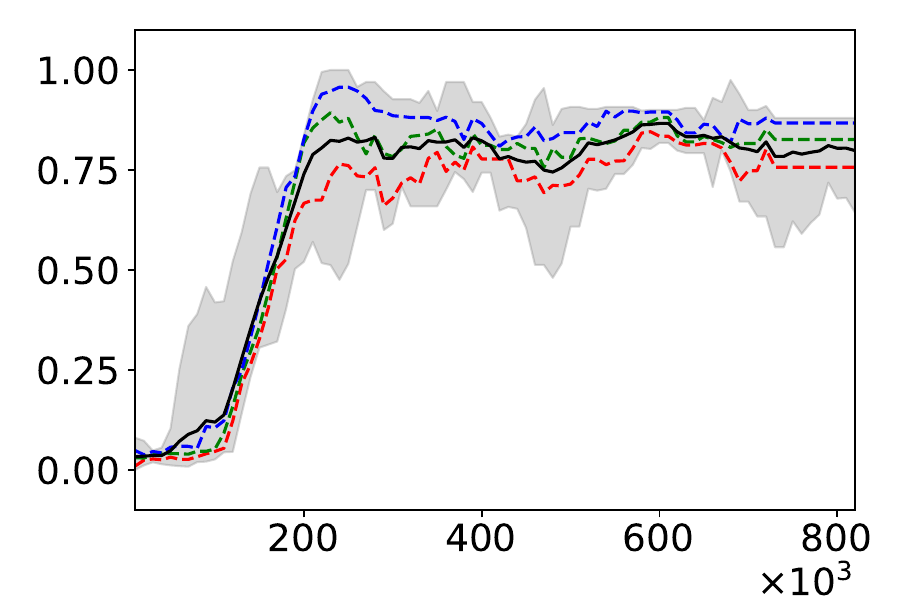}
    \end{subfigure}\\
    \begin{subfigure}[b]{0.32\linewidth}
        \includegraphics[width=\linewidth]{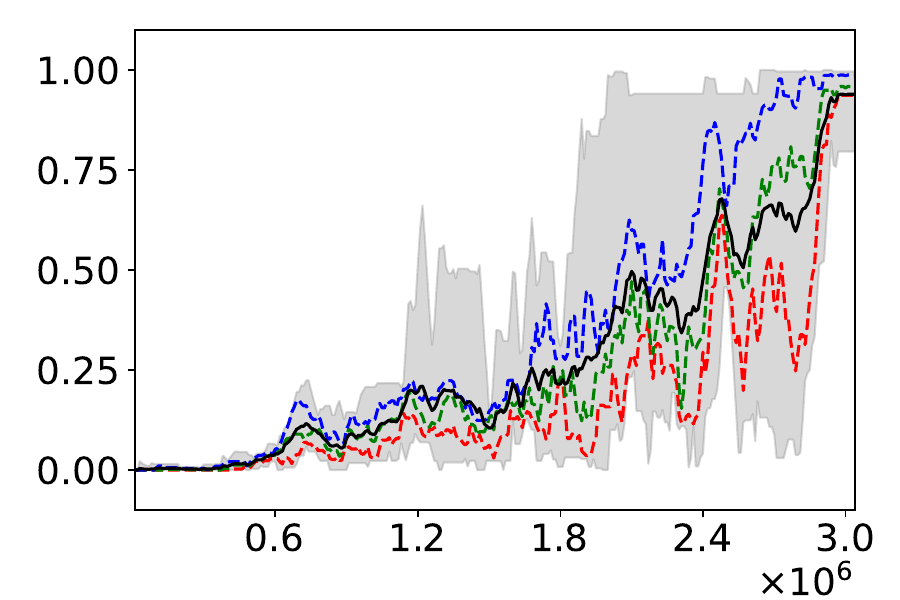}
    \end{subfigure}
    \hfill
    \begin{subfigure}[b]{0.32\linewidth}
        \includegraphics[width=\linewidth]{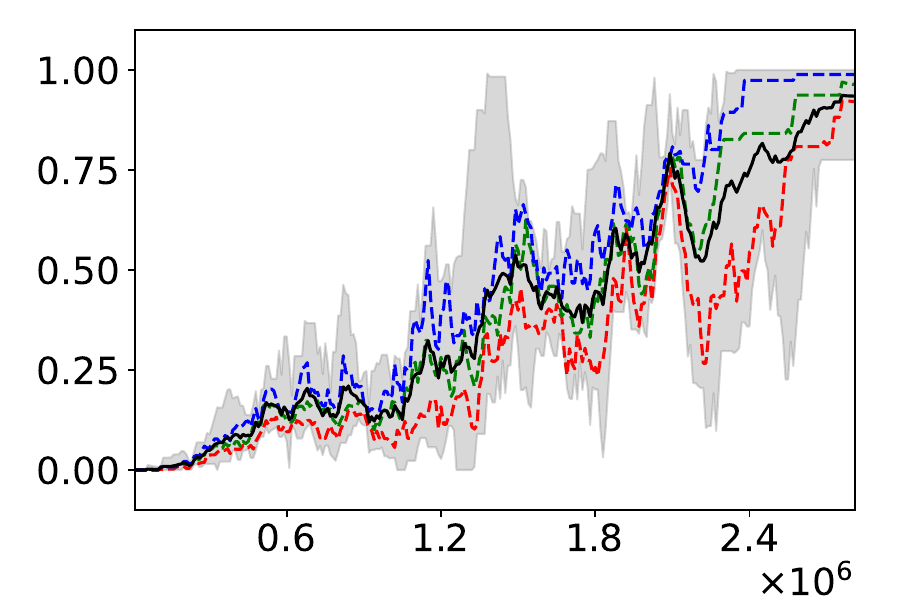}
    \end{subfigure}
    \hfill
    \begin{subfigure}[b]{0.32\linewidth}
        \includegraphics[width=\linewidth]{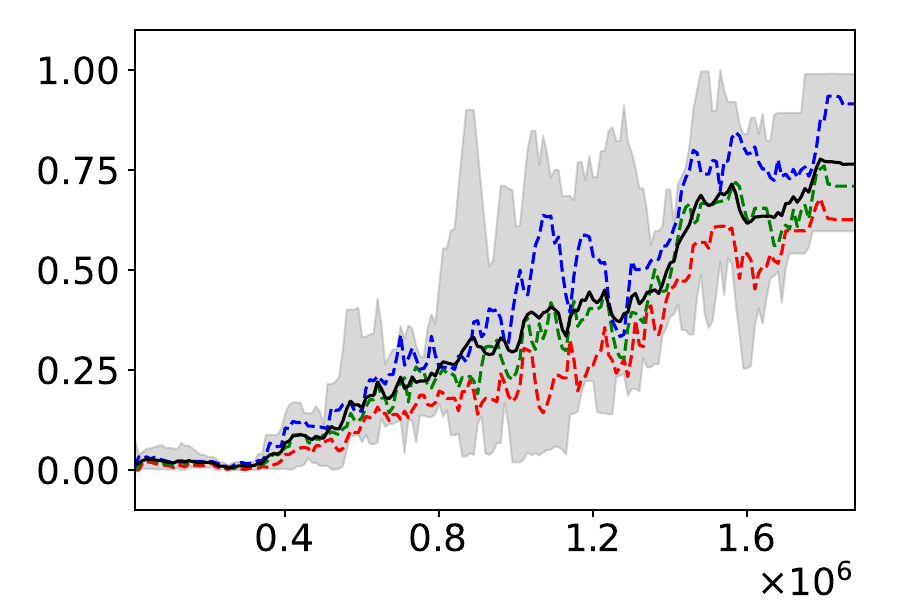}
    \end{subfigure}
    
    \caption{LSRM mean10 performance values: Finite state space environment in the upper half and infinite state space environments in the lower half. 
    The results for LSRM-GF are in the first row and for LSRM-FT in the second row.}
    \label{fig:mean10_performance_values_LSRM_methods}
\end{figure}

\ref{RQ4}, which addresses whether LSRM learns an almost surely equivalent SRM, is still not answered. So far, we observe that the agent trained by LSRM reaches optimal performance values in the finite setting and good performance values in the infinite setting. LSRM must have learned an SRM that is close enough to the one used because it would not reach these performance values if it had learned an SRM that was far from the true one. However, the natural question is how do the inferred SRMs look in comparison to the SRMs that are used in the environment? This is a crucial question because LSRM provides the SRM to the user, and thus, interesting information about the reward structure. This gives the user the ability to make decisions or conclusions about the environment.
The figures \ref{fig:LSRM_inferred_SRM_Formula_Templates_Finite}, \ref{fig:LSRM_inferred_SRM_Formula_Templates_Infinite}, and \ref{fig:LSRM_inferred_SRMs_Given_Formulas} show the SRMs that are learned by LSRM for the task post\_inner\_offices.
Note that only the SRMs for the task post\_inner\_offices are compared, since the observations for the other tasks are the same.
The SRM that is used in the environment for the experiment, and thus, is the SRM to compare the learned SRMs by LSRM with, is the SRM in Figure \ref{fig:post_inner_offices_env_srm}. For simplicity, we name it SRM-Real in the following.
First, we compare the SRMs that are generated by LSRM-GF. Afterwards, we discuss and analyze the SRMs that are generated by LSRM-FT.
\subsubsection{SRMs by LSRM-GF}
First, we compare the SRMs that LSRM-GF learned. They are visualized in Figure \ref{fig:LSRM_inferred_SRMs_Given_Formulas}. For simplicity, we name the learned SRM in the finite environment SRM-GF-Fin and in the infinite environment SRM-GF-Inf. We can observe that
\begin{enumerate}
    \item \label{SRMs_by_LSRM-GF_1} SRM-GF-Fin is not equal to SRM-Real. They differ in the aspect that SRM-GF-Fin has one state less and the transition with the output 10 is a self-loop in state 2 instead of a new state with a self-loop that always outputs 0 as in SRM-Real. Nevertheless, they are almost surely equivalent. All episodes end if the goal is reached and a reward of 10 is observed. This means that all episodes end after the SRM uses the transition with the output value of 10. Consequently, SRM-GF-Fin and SRM-Real are equivalent on all attainable trajectories, and thus, they are almost surely equivalent.
    \item The results and arguments of the previous point~\ref{SRMs_by_LSRM-GF_1} also apply in the comparison of SRM-GF-Inf and SRM-Real. The difference is that the names of states 1 and 2 are swapped, and the transition that outputs a reward of 10 goes back to state 2. Nevertheless, this difference does not change the result that SRM-GF-Inf and SRM-Real are not equal but they are almost surely equivalent.
\end{enumerate}

\subsubsection{SRMs by LSRM-FT}
Here, we compare the SRMs that LSRM-FT learned. The SRMs are visualized in Figure \ref{fig:LSRM_inferred_SRM_Formula_Templates_Finite} (finite environment) and Figure \ref{fig:LSRM_inferred_SRM_Formula_Templates_Infinite} (infinite environment). For simplicity, we name the learned SRM in the finite environment SRM-FT-Fin and in the infinite environment SRM-FT-Inf. We can observe that SRM-FT-Fin and SRM-FT-Inf are both not equal to SRM-Real. The remaining question is whether they are almost surely equivalent. Here, we distinguish between the comparison to SRM-FT-Fin and SRM-FT-Inf and can observe that
\begin{enumerate}
    \item the formulas that are used as guards differ between SRM-Real and SRM-FT-Fin. However, the only reason for this behaviour is that LSRM-FT uses real values for the values in the formula templates and LSRM-FT can only observe discrete positions in the finite environment. So, the formulas are equivalent if we only consider discrete positions. The additional transitions introduced by the post-processing step in Section \ref{LSRM_GF_Generate_SRM_from_Model} only cover non-discrete positions. In a discrete setting, they remain unused and exist only because the completeness check is performed over the full space $\mathbb{R}$. As a result, SRM-Real and SRM-FT-Fin are almost surely equivalent according to the definition in Section \ref{LSRM_Convergence}.
    \item SRM-Real and SRM-FT-Inf are not almost surely equivalent. Let us consider the transition from state 0 to state 1 and the self-loop from state 0. SRM-Real and SRM-FT-Inf output a different reward for $X = \{(x,y)| (x \in (4.99994980, 5) \vee x \in ( 6, 6.00046406)) \wedge (y \in (5, 5.00200890) \vee y \in (6, 6.00000048))\}$. The probability of a trajectory $\tau$ that includes a observation of $X$ in the state-sequence and the SRM is in SRM state $0$ is greater than 0. Consequently, $\tau$ is a counterexample to the claim that SRM-Real and SRM-FT-Inf are almost surely equivalent, as the rewards generated by SRM-Real and SRM-FT-Inf differ. However, this observation does not break the convergence properties described in Section \ref{LSRM_Convergence} because we only run LSRM-FT for a finite number of epochs, while our convergence result holds in the limit. However, SRM-FT-Inf is similar to SRM-Real if we compare the structure, transitions and guards. As a result, LSRM can train effective policies without an almost surely equivalent SRM.
\end{enumerate}

\begin{figure}
    \centering
    \begin{subfigure}[b]{\linewidth}
        \includegraphics[width=\linewidth]{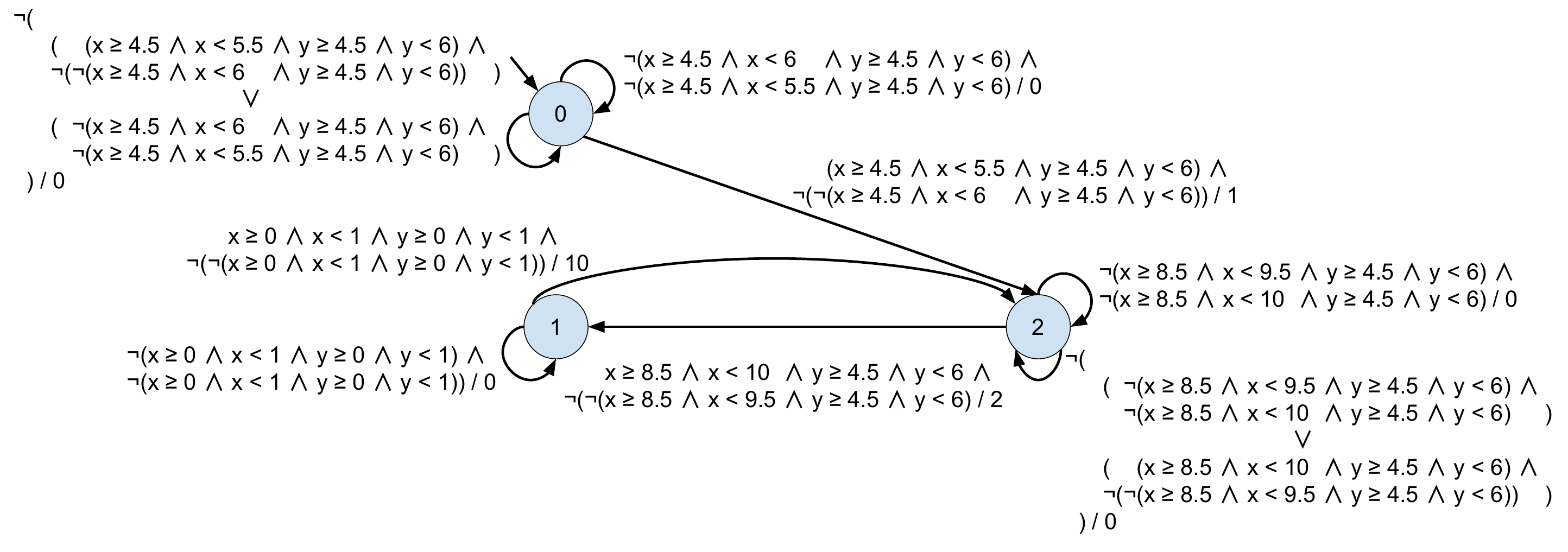}
    \end{subfigure}

    \caption{Learned SRM by LSRM-FT for task post\_inner\_offices in the discrete Office World.}
    \label{fig:LSRM_inferred_SRM_Formula_Templates_Finite}
\end{figure}

\begin{figure}
    \centering
    \begin{subfigure}[b]{\linewidth}
        \includegraphics[width=\linewidth]{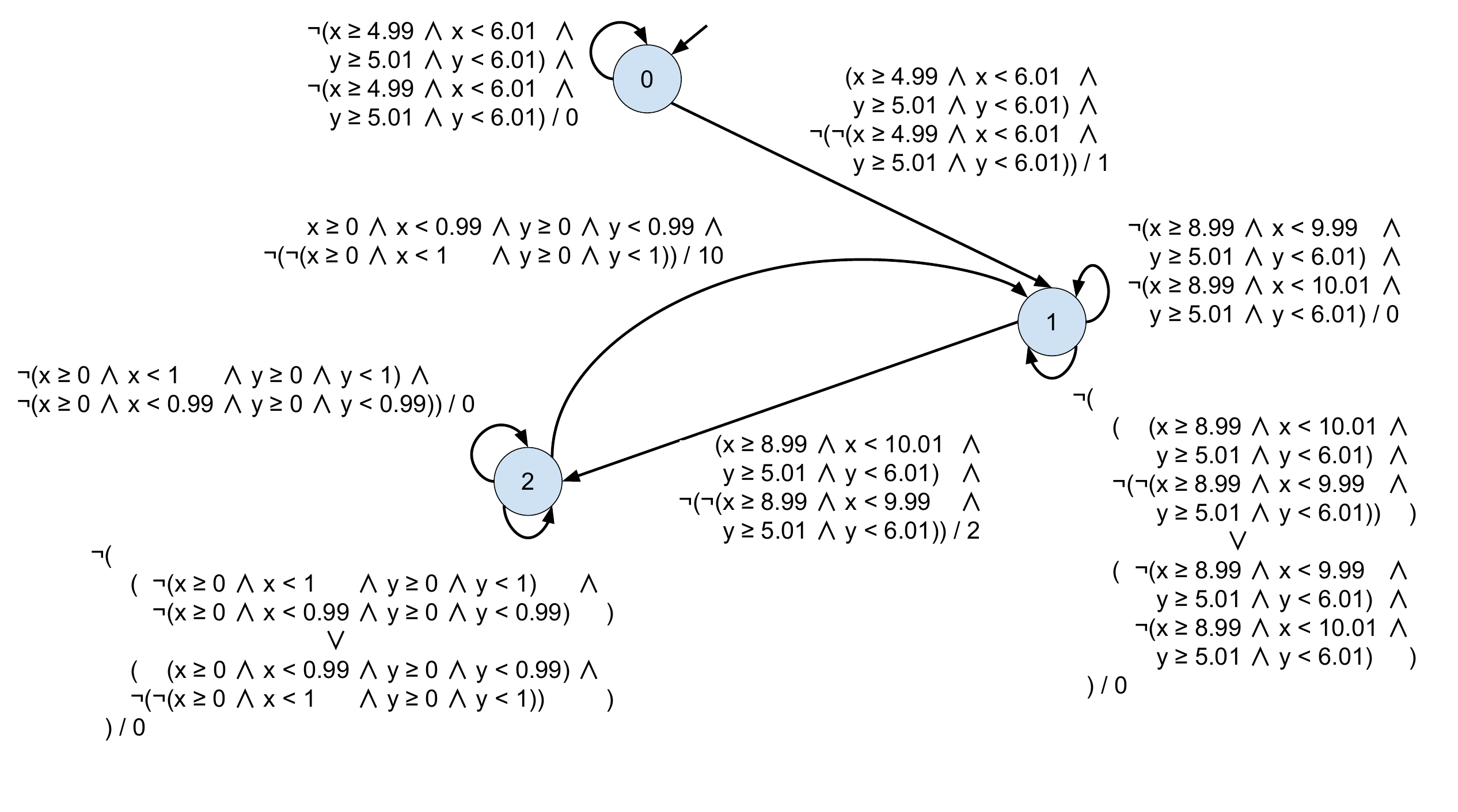}
    \end{subfigure}

    \caption{Learned SRM by LSRM-FT for task post\_inner\_offices in the continuous Office World. For better visualization, we apply our rounding function $\mathrm{round'}$ that rounds the input value to two decimal places. The function $\mathrm{round'}$ is defined by $\mathrm{round'}(x)=
    \begin{cases}
        \frac{\left\lfloor x \cdot 100 \right\rfloor}{100} & \text{, if } \left\lfloor x \cdot 100 \right\rfloor \bmod 10 \geq 5, \\
        \frac{\left\lceil x \cdot 100 \right\rceil}{100} & \text{, if } \left\lfloor x \cdot 100 \right\rfloor \bmod 10 < 5.
    \end{cases}$\\
    The learned SRM is even closer to the SRM used in the environment than the one visualized in this figure.}
    \label{fig:LSRM_inferred_SRM_Formula_Templates_Infinite}
\end{figure}

\begin{figure}
    \centering
    \begin{subfigure}[b]{0.45\linewidth}
        \includegraphics[width=\linewidth]{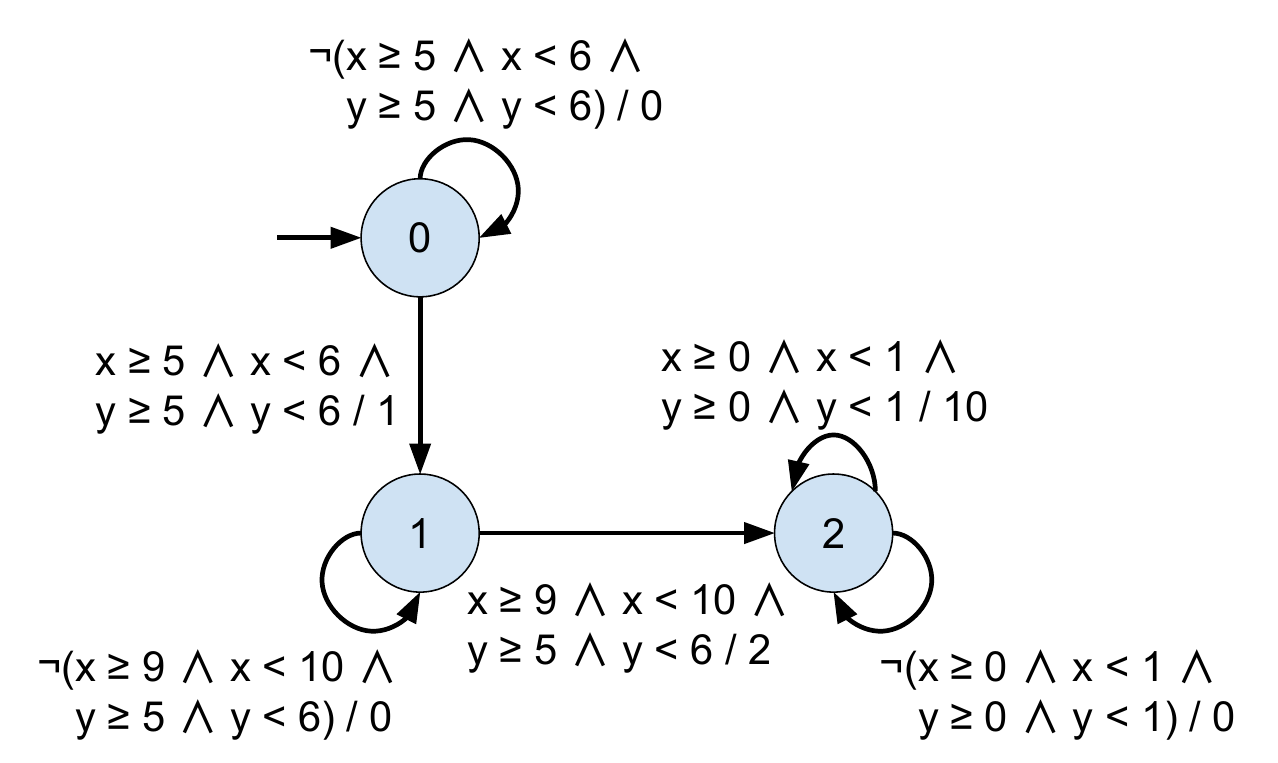}
        \subcaption{Discrete Office World}
    \end{subfigure}
    \hspace{0.05\linewidth}
    \begin{subfigure}[b]{0.45\linewidth}
        \includegraphics[width=\linewidth]{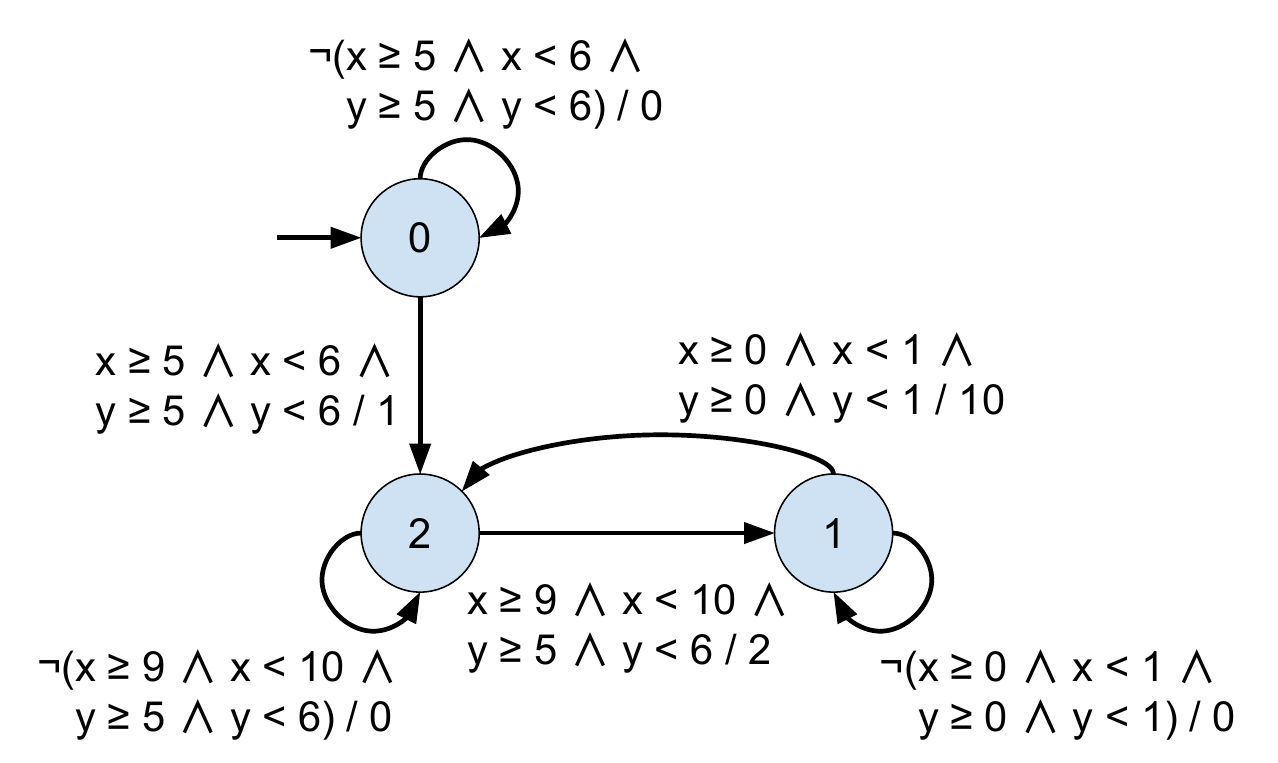}
        \subcaption{Continuous Office World}
    \end{subfigure}

    \caption{Learned SRMs by LSRM-GF for task post\_inner\_offices in the discrete and continuous Office World.}
    \label{fig:LSRM_inferred_SRMs_Given_Formulas}
\end{figure}

Overall, we observe two key points in the evaluation of both LSRM versions. First, our LSRM versions can infer in almost all cases almost surely equivalent SRMs in comparison to the SRMs used in the environments. And second, both LSRM versions can learn effective policies that yield good mean10 performance values.
\newpage
\section{Conclusion}
In this paper, we introduced SRMs as a novel method to represent non-Markovian reward functions. They are more flexible than RMs and only require the standard MDP definition in RL. Furthermore, the formulas that are used as guards in the SRM are more interpretable for the user and do not require generic labels in the environment as opposed to RMs. Based on this novel concept we developed (D)QSRM and LSRM, two powerful methods which leverage the SRM structure to learn policies efficiently. In the experimental evaluation, we demonstrate their applicability, capability and effectiveness to yield good mean10 performance values. LSRM-FT provides the possibility to infer semantic information (symbolic formulas/guards) about the reward function automatically. Accordingly, the user receives important and useful information about the hidden reward function.

\section*{Acknowledgements}
This work has been financially supported by the Research Center Trustworthy Data Science and Security (\mbox{\url{https://rc-trust.ai}}), one of the Research Alliance centers within the UA Ruhr (\mbox{\url{https://uaruhr.de}}). We also thank Jan Corazza and Simon Lutz for helpful discussions and suggestions.

%
%
%

\bibliographystyle{splncs04}
\bibliography{RL_with_SRMs_bibliography}

@article{Corazza_2022,
   title={Reinforcement Learning with Stochastic Reward Machines},
   volume={36},
   ISSN={2159-5399},
   url={http://dx.doi.org/10.1609/aaai.v36i6.20594},
   DOI={10.1609/aaai.v36i6.20594},
   number={6},
   journal={Proceedings of the AAAI Conference on Artificial Intelligence},
   publisher={Association for the Advancement of Artificial Intelligence (AAAI)},
   author={Corazza, Jan and Gavran, Ivan and Neider, Daniel},
   year={2022},
   month=jun, pages={6429–6436} }

@inproceedings{KR2025-55,
    title     = {{Pushdown Reward Machines for Reinforcement Learning}},
    author    = {Varricchione, Giovanni and Klassen, Toryn Q. and Alechina, Natasha and Dastani, Mehdi and Logan, Brian and McIlraith, Sheila A.},
    booktitle = {{Proceedings of the 22nd International Conference on Principles of Knowledge Representation and Reasoning}},
    pages     = {566--576},
    year      = {2025},
    month     = {10},
    doi       = {10.24963/kr.2025/55},
    url       = {https://doi.org/10.24963/kr.2025/55},
  }

@inproceedings{
bester2023counting,
title={Counting Reward Automata: Sample Efficient Reinforcement Learning Through The Exploitation of Reward Function Structure},
author={Tristan Bester and Benjamin Rosman and Steven James and Geraud Nangue Tasse},
booktitle={Neuro-Symbolic Learning and Reasoning in the era of Large Language Models},
year={2023},
url={https://openreview.net/forum?id=XfXLM8uIxH}
}

@incollection{Puterman.1990,
 author = {Puterman, Martin L.},
 title = {Chapter 8 Markov decision processes},
 pages = {331--434},
 volume = {2},
 publisher = {North-Holland},
 isbn = {9780444874733},
 series = {Handbooks in Operations Research and Management Science},
 editor = {Heyman, Daniel P. and Sobel, Matthew J.},
 booktitle = {Stochastic models},
 year = {1990},
 address = {Amsterdam and New York and New York, NY, U.S.A},
 doi = {10.1016/S0927-0507(05)80172-0}
}

@book{Sutton.2020,
 author = {Sutton, Richard S. and Barto, Andrew},
 year = {2020},
 title = {Reinforcement learning: An introduction},
 price = {Hardcover : GBP 62.00, USD 80.00},
 address = {Cambridge, Massachusetts and London, England},
 edition = {Second edition},
 publisher = {{The MIT Press}},
 isbn = {9780262039246},
 series = {Adaptive computation and machine learning}
}

@inproceedings{
towers2025gymnasium,
title={Gymnasium: A Standard Interface for Reinforcement Learning Environments},
author={Mark Towers and Ariel Kwiatkowski and John U. Balis and Gianluca De Cola and Tristan Deleu and Manuel Goul{\~a}o and Kallinteris Andreas and Markus Krimmel and Arjun KG and Rodrigo De Lazcano Perez-Vicente and J K Terry and Andrea Pierr{\'e} and Sander V Schulhoff and Jun Jet Tai and Hannah Tan and Omar G. Younis},
booktitle={The Thirty-ninth Annual Conference on Neural Information Processing Systems Datasets and Benchmarks Track},
year={2025},
url={https://openreview.net/forum?id=qPMLvJxtPK}
}

@InProceedings{10.1007/978-3-642-39274-0_3,
author="Veanes, Margus",
editor="Konstantinidis, Stavros",
title="Applications of Symbolic Finite Automata",
booktitle="Implementation and Application of Automata",
year="2013",
publisher="Springer Berlin Heidelberg",
address="Berlin, Heidelberg",
pages="16--23",
isbn="978-3-642-39274-0"
}

@article{WatkinsChristopherJ.C.H..1992,
 author = {{Watkins, Christopher J. C. H.} and Dayan, Peter},
 year = {1992},
 title = {Q-learning},
 url = {https://link.springer.com/article/10.1007/BF00992698},
 pages = {279--292},
 volume = {8},
 number = {3-4},
 issn = {1573-0565},
 journal = {Machine Learning},
 doi = {10.1007/BF00992698}
}

@InProceedings{Moura.2008,
author="de Moura, Leonardo
and Bj{\o}rner, Nikolaj",
editor="Ramakrishnan, C. R.
and Rehof, Jakob",
title="Z3: An Efficient SMT Solver",
booktitle="Tools and Algorithms for the Construction and Analysis of Systems",
year="2008",
publisher="Springer Berlin Heidelberg",
address="Berlin, Heidelberg",
pages="337--340",
isbn="978-3-540-78800-3"
}

@misc{Moore.1990,
 author = {Moore, Andrew William},
 date = {1990},
 year = {1990},
 title = {Efficient memory-based learning for robot control},
 publisher = {{Computer Laboratory, University of Cambridge}},
 doi = {10.48456/TR-209}
}

@InProceedings{10.1007/978-3-319-45641-6_26,
author="Monniaux, David",
editor="Gerdt, Vladimir P.
and Koepf, Wolfram
and Seiler, Werner M.
and Vorozhtsov, Evgenii V.",
title="A Survey of Satisfiability Modulo Theory",
booktitle="Computer Algebra in Scientific Computing",
year="2016",
publisher="Springer International Publishing",
address="Cham",
pages="401--425",
isbn="978-3-319-45641-6"
}

@inproceedings{10.5555/3709347.3743526,
author = {Ardon, Leo and Furelos-Blanco, Daniel and Parac, Roko and Russo, Alessandra},
title = {FORM: Learning Expressive and Transferable First-Order Logic Reward Machines},
year = {2025},
isbn = {9798400714269},
publisher = {International Foundation for Autonomous Agents and Multiagent Systems},
address = {Richland, SC},
booktitle = {Proceedings of the 24th International Conference on Autonomous Agents and Multiagent Systems},
pages = {142–151},
numpages = {10},
location = {Detroit, MI, USA},
series = {AAMAS '25}
}

@incollection{DAntoni.2017,
 author = {D'Antoni, Loris and Veanes, Margus},
 title = {The Power of Symbolic Automata and Transducers},
 pages = {47--67},
 volume = {10426},
 publisher = {Springer},
 isbn = {978-3-319-63386-2},
 series = {Lecture Notes in Computer Science},
 editor = {Majumdar, Rupak and Kun{\v{c}}ak, Viktor},
 booktitle = {Computer aided verification},
 year = {2017},
 address = {Cham},
 doi = {10.1007/978-3-319-63387-9_3}
}

@article{Xu_Gavran_Ahmad_Majumdar_Neider_Topcu_Wu_2020, title={Joint Inference of Reward Machines and Policies for Reinforcement Learning}, volume={30}, url={https://ojs.aaai.org/index.php/ICAPS/article/view/6756}, DOI={10.1609/icaps.v30i1.6756}, abstractNote={&lt;p&gt;Incorporating &lt;em&gt;high-level knowledge&lt;/em&gt; is an effective way to expedite reinforcement learning (RL), especially for complex tasks with sparse rewards. We investigate an RL problem where the high-level knowledge is in the form of &lt;em&gt;reward machines&lt;/em&gt;, a type of Mealy machines that encode non-Markovian reward functions. We focus on a setting in which this knowledge is &lt;em&gt;a priori&lt;/em&gt; not available to the learning agent. We develop an iterative algorithm that performs joint inference of reward machines and policies for RL (more specifically, q-learning). In each iteration, the algorithm maintains a &lt;em&gt;hypothesis&lt;/em&gt; reward machine and a &lt;em&gt;sample&lt;/em&gt; of RL episodes. It uses a separate q-function defined for each state of the current hypothesis reward machine to determine the policy and performs RL to update the q-functions. While performing RL, the algorithm updates the sample by adding RL episodes along which the obtained rewards are inconsistent with the rewards based on the current hypothesis reward machine. In the next iteration, the algorithm infers a new hypothesis reward machine from the updated sample. Based on an &lt;em&gt;equivalence&lt;/em&gt; relation between states of reward machines, we transfer the q-functions between the hypothesis reward machines in consecutive iterations. We prove that the proposed algorithm converges almost surely to an optimal policy in the limit. The experiments show that learning high-level knowledge in the form of reward machines leads to fast convergence to optimal policies in RL, while the baseline RL methods fail to converge to optimal policies after a substantial number of training steps.&lt;/p&gt;}, number={1}, journal={Proceedings of the International Conference on Automated Planning and Scheduling}, author={Xu, Zhe and Gavran, Ivan and Ahmad, Yousef and Majumdar, Rupak and Neider, Daniel and Topcu, Ufuk and Wu, Bo}, year={2020}, month={Jun.}, pages={590-598} }

@article{Kaelbling.1996,
 author = {Kaelbling, L. P. and Littman, M. L. and Moore, A. W.},
 year = {1996},
 title = {Reinforcement Learning: A Survey},
 pages = {237--285},
 volume = {4},
 journal = {Journal of Artificial Intelligence Research},
 doi = {10.1613/jair.301}
}

@article{mnih2015humanlevel,
  author = {Mnih, Volodymyr and Kavukcuoglu, Koray and Silver, David and Rusu, Andrei A. and Veness, Joel and Bellemare, Marc G. and Graves, Alex and Riedmiller, Martin and Fidjeland, Andreas K. and Ostrovski, Georg and Petersen, Stig and Beattie, Charles and Sadik, Amir and Antonoglou, Ioannis and King, Helen and Kumaran, Dharshan and Wierstra, Daan and Legg, Shane and Hassabis, Demis},
  description = {Human-level control through deep reinforcement learning - nature14236.pdf},
  issn = {00280836},
  journal = {Nature},
  month = feb,
  number = 7540,
  pages = {529--533},
  publisher = {Nature Publishing Group, a division of Macmillan Publishers Limited. All Rights Reserved.},
  title = {Human-level control through deep reinforcement learning},
  url = {http://dx.doi.org/10.1038/nature14236},
  volume = 518,
  year = 2015
}

@inproceedings{Icarte.2018,
 author = {Icarte, Rodrigo Toro and Klassen, Toryn and Valenzano, Richard and McIlraith, Sheila},
 title = {Using Reward Machines for High-Level Task Specification and Decomposition in Reinforcement Learning},
 url = {https://proceedings.mlr.press/v80/icarte18a.html},
 pages = {2107--2116},
 volume = {80},
 publisher = {PMLR},
 series = {Proceedings of Machine Learning Research},
 editor = {Dy, Jennifer and Krause, Andreas},
 booktitle = {Proceedings of the 35th International Conference on Machine Learning},
 year = {2018}
}

@InProceedings{pmlr-v162-zhou22b,
  title = 	 {A Hierarchical {B}ayesian Approach to Inverse Reinforcement Learning with Symbolic Reward Machines},
  author =       {Zhou, Weichao and Li, Wenchao},
  booktitle = 	 {Proceedings of the 39th International Conference on Machine Learning},
  pages = 	 {27159--27178},
  year = 	 {2022},
  editor = 	 {Chaudhuri, Kamalika and Jegelka, Stefanie and Song, Le and Szepesvari, Csaba and Niu, Gang and Sabato, Sivan},
  volume = 	 {162},
  series = 	 {Proceedings of Machine Learning Research},
  month = 	 {17--23 Jul},
  publisher =    {PMLR},
  url = 	 {https://proceedings.mlr.press/v162/zhou22b.html}
}

@inproceedings{NEURIPS2019_bdbca288,
 author = {Paszke, Adam and Gross, Sam and Massa, Francisco and Lerer, Adam and Bradbury, James and Chanan, Gregory and Killeen, Trevor and Lin, Zeming and Gimelshein, Natalia and Antiga, Luca and Desmaison, Alban and Kopf, Andreas and Yang, Edward and DeVito, Zachary and Raison, Martin and Tejani, Alykhan and Chilamkurthy, Sasank and Steiner, Benoit and Fang, Lu and Bai, Junjie and Chintala, Soumith},
 booktitle = {Advances in Neural Information Processing Systems},
 editor = {H. Wallach and H. Larochelle and A. Beygelzimer and F. d\textquotesingle Alch\'{e}-Buc and E. Fox and R. Garnett},
 pages = {},
 publisher = {Curran Associates, Inc.},
 title = {PyTorch: An Imperative Style, High-Performance Deep Learning Library},
 url = {https://proceedings.neurips.cc/paper_files/paper/2019/file/bdbca288fee7f92f2bfa9f7012727740-Paper.pdf},
 volume = {32},
 year = {2019}
}

@article{10.1613/jair.1.12440,
author = {Toro Icarte, Rodrigo and Klassen, Toryn Q. and Valenzano, Richard and McIlraith, Sheila A.},
title = {Reward Machines: Exploiting Reward Function Structure in Reinforcement Learning},
year = {2022},
issue_date = {May 2022},
publisher = {AI Access Foundation},
address = {El Segundo, CA, USA},
volume = {73},
issn = {1076-9757},
url = {https://doi.org/10.1613/jair.1.12440},
doi = {10.1613/jair.1.12440},
journal = {J. Artif. Int. Res.},
month = may,
numpages = {36}
}

\end{document}